\documentclass[conference]{IEEEtran}

\usepackage[numbers]{natbib}
\usepackage{multicol}
\PassOptionsToPackage{bookmarks=true,hidelinks,colorlinks=true,linkcolor=blue,citecolor=blue,urlcolor=blue,filecolor=blue}{hyperref}
\usepackage{hyperref}
\usepackage{times}

\usepackage{amsmath,amsfonts,bm}
\usepackage{todonotes}

\def\eqref#1{equation~\ref{#1}}

\def\1{\bm{1}}

\DeclareMathAlphabet{\mathsfit}{\encodingdefault}{\sfdefault}{m}{sl}
\SetMathAlphabet{\mathsfit}{bold}{\encodingdefault}{\sfdefault}{bx}{n}

\usepackage{algorithm}
\usepackage{algpseudocode} 
\usepackage[hidelinks,colorlinks=true,linkcolor=blue,citecolor=blue,urlcolor=blue,filecolor=blue]{hyperref}
\usepackage{caption} 
\usepackage{wrapfig}
\usepackage{subcaption}
\let\labelindent\relax
\usepackage{enumitem}
\usepackage{color}
\definecolor{OursLine}{HTML}{E8F2FF}           
\usepackage{booktabs}
\usepackage{array}
\usepackage[numbers]{natbib}
\usepackage{multicol}
\definecolor{OursRow}{HTML}{F2F8FF} 
\definecolor{AlexColor}{HTML}{000000} 
\usepackage{amsthm}
\usepackage{thmtools}

\newtheorem{lemma}{Lemma}[section]
\newtheorem{proposition}{Proposition}[section]

\newcommand{\alexadd}[1]{\textcolor{AlexColor}{#1}}

\definecolor{OursRow}{HTML}{F2F8FF} 
\usepackage{placeins}   
\definecolor{OursBlue}{HTML}{1F77B4}
\usepackage{colortbl}
\usepackage{graphicx}
\usepackage{subcaption}  
\usepackage{float}  
\newcommand{\method}{\textsc{PRISM}}
\newcommand{\methodlong}{Performer RS--IMLE for Multisensory Imitation}

\begin{document}

\title{PRISM: Performer RS-IMLE for Single-pass Multisensory Imitation Learning}




%

\author{
\authorblockN{
Amisha Bhaskar\textsuperscript{1},
Pratap Tokekar\textsuperscript{1},
Stefano Di Cairano\textsuperscript{2},
Alexander Schperberg\textsuperscript{2}
}

\authorblockA{\textsuperscript{1}
University of Maryland, College Park\\
College Park, MD 20742, USA\\
{\tt\small amishab@umd.edu, tokekar@umd.edu}
}

\authorblockA{\textsuperscript{2}
Mitsubishi Electric Research Laboratories (MERL)\\
Cambridge, MA 02139, USA\\
{\tt\small dicairano@merl.com, schperberg@merl.com}
}
}
\maketitle

\begin{abstract}

Robotic imitation learning typically requires models that capture multimodal action distributions while operating at real-time control rates and accommodating multiple sensing modalities. Although recent generative approaches such as diffusion models, flow matching, and Implicit Maximum Likelihood Estimation (IMLE) have achieved promising results, they often satisfy only a subset of these requirements. To address this, we introduce \method{}, a single-pass policy based on a batch-global rejection-sampling variant of IMLE. \method{} couples a temporal multisensory encoder (integrating RGB, Depth, tactile, audio, and proprioception) with a linear-attention generator using a Performer architecture. We demonstrate the efficacy of \method{} on a diverse real-world hardware suite, including \textbf{loco-manipulation using a Unitree \textsc{Go2} with a 7-DoF arm D1} and \textbf{tabletop manipulation with a UR5 manipulator}. Across challenging physical tasks such as pre-manipulation parking, high-precision insertion, and multi-object pick-and-place, \method{} outperforms state-of-the-art diffusion policies by \textbf{10--25\% in success rate} while maintaining high-frequency (\textbf{30--50\,Hz}) closed-loop control. We further validate our approach on large-scale simulation benchmarks, including CALVIN, MetaWorld, and Robomimic. In CALVIN (10\% data split), \method{} improves success rates by $\sim$25\% over diffusion and $\sim$20\% over flow matching, while simultaneously \textbf{reducing trajectory jerk by $20\times$--$50\times$}. These results position \method{} as a fast, accurate, and multisensory imitation policy that retains multimodal action coverage without the latency of iterative sampling.


\end{abstract}

\section{Introduction}
\label{introduction}
\begin{figure*}[t]
    \centering
    \includegraphics[width=0.83\linewidth]{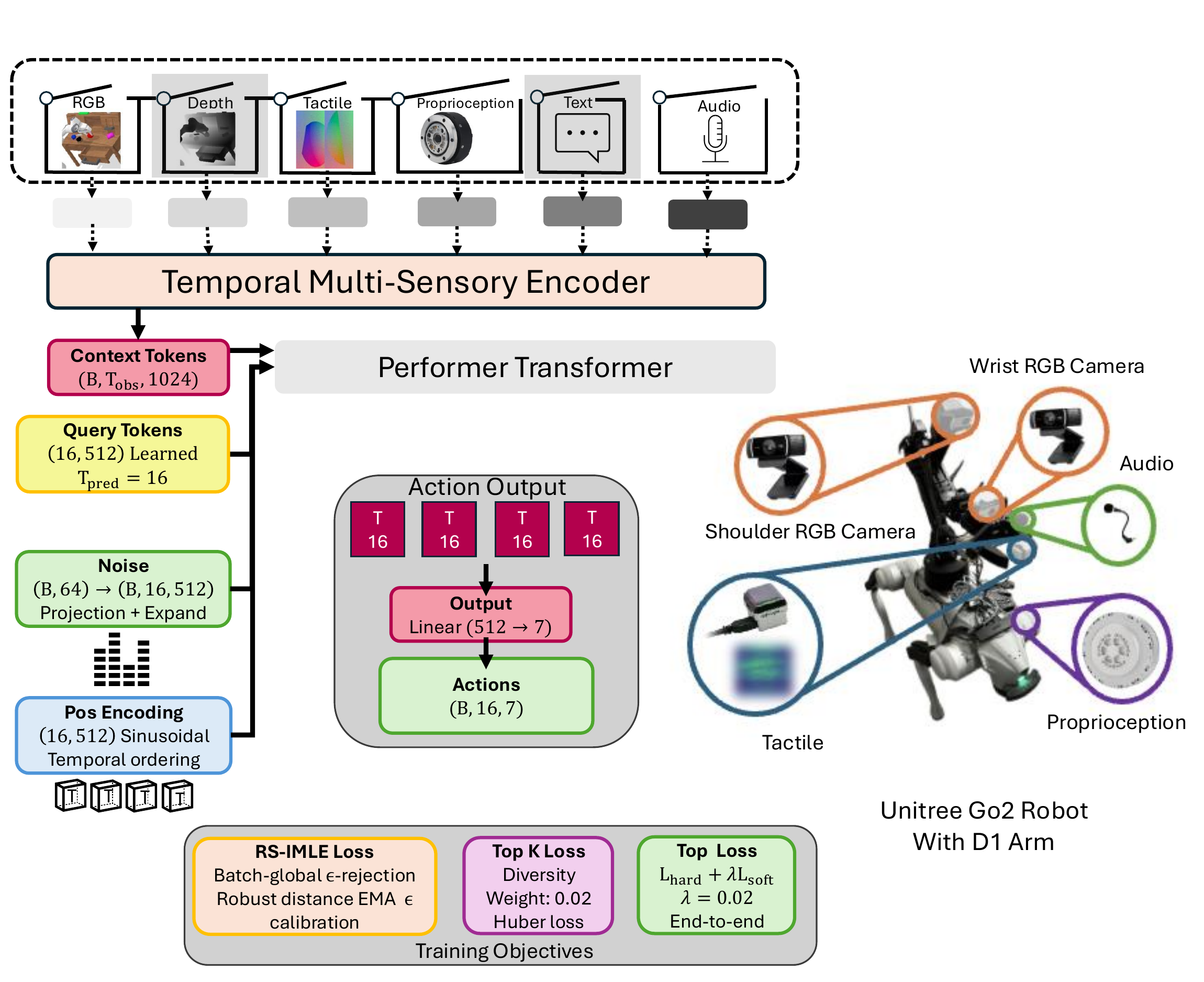}
    \caption{\textbf{Overview of \methodlong{} (\method{}).}
    Per-timestep features from the available sensors are fused into temporal context tokens.
    Across all benchmarks we include one or more \emph{RGB cameras}; \emph{depth camera, tactile, proprioception, text tokens, and audio} are used when present in a dataset. 
    A \emph{bidirectional} FAVOR$^{+}$ (Performer) generator with learned query tokens outputs a full sequence of actions in a single pass.
    Training uses a \emph{batch-global} RS-IMLE objective (robust Charbonnier distance, $\varepsilon$-rejection with EMA calibration, optional small coverage term) to preserve action \emph{multimodality} without iterative sampling.
    We train \emph{separate models per benchmark} (MetaWorld, CALVIN, Robomimic, real hardware) using only the modalities available in that benchmark (see Fig. \ref{fig:dataset_summary} for list of modalities used per dataset type); unused sensors are shown in gray.
    At inference we sample \textit{k} latent-conditioned trajectories in one batched pass and select one for receding-horizon control for hardware experiment.\vspace{-0.1in}}
    \label{fig:method_architecture}
\end{figure*}

\begin{figure*}[t]
    \centering
    \includegraphics[width=0.95\linewidth]{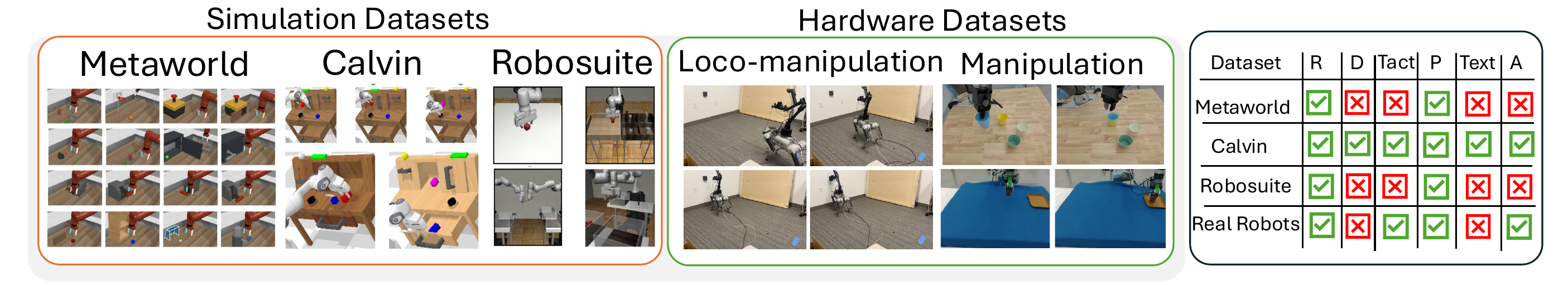}
    \caption{\textbf{Benchmark Datasets.} On the right, we show the sensor modalities used per dataset. \textit{R} is for RGB camera, \textit{D} is for depth camera, \textit{Tact} is for tactile, \textit{P} is for proprioception, \textit{Text} is for text tokens, and \textit{A} is for audio.\vspace{-0.2in}}
    \label{fig:dataset_summary}
\end{figure*}

Imitation Learning (IL)~\citep{atkeson1997robot, argall2009survey, rahmatizadeh2018vision, avigal2022speedfolding, yu2025sketch, bhaskar2024lava} is a powerful framework for acquiring complex visuomotor policies directly from demonstrations. IL is particularly well-suited for real-world robotic applications where safety and precision are paramount. These advantages are magnified in settings involving \textit{multisensory observations}, such as RGB images, depth, tactile feedback, and proprioception, which are difficult to simulate accurately and often only partially observable in practice. An ideal imitation learning policy should therefore meet three critical criteria: (i) operate at \textbf{real-time inference rates} for closed-loop physical control; (ii) represent the \textbf{multimodal distribution} of expert behavior (e.g., multiple valid grasps or approach paths); and (iii) perform robustly under \textbf{partial sensory input}, by effectively fusing multiple sensory streams.

Recent generative policies address subsets of this challenge, but rarely all. Diffusion models~\citep{chi2025diffusion, ze20243d, song2021denoising} have shown impressive ability to capture complex, multimodal action distributions, but their reliance on iterative denoising (often $\sim$10–100 steps per inference) severely limits real-time deployment~\citep{lu2024manicm, prasad2024consistency}. Flow-based methods~\citep{hu2024adaflow, funk2024actionflow, zhang2025flowpolicy, yang2024consistency} reduce sampling steps through continuous-time integration, but may still struggle with multimodal fidelity~\citep{rouxel2024flow}. Moreover, these approaches are computationally expensive during inference due to iterative sampling and often require large datasets to train effectively. These trade-offs highlight the need for approaches that are both expressive and efficient.

Recently a new model, called IMLE Policy, was proposed for conditional imitation learning~\citep{rana2025imle} based on Implicit Maximum Likelihood Estimation (IMLE)~\citep{li2018implicit}. IMLE trains by minimizing the distance from each expert datapoint to its nearest generated sample, ensuring full coverage of the expert distribution and enabling \textbf{single-pass inference}, generating multiple plausible actions in parallel and selecting one without iterative refinement. IMLE policy uses Rejection Sampling IMLE (RS-IMLE)~\citep{vashist2024rejection} which further improves candidate quality by discarding those that are too similar to ground truth, promoting diversity in training-time sampling.

\citep{rana2025imle} reported that the IMLE Policy has competitive or favorable accuracy and sample efficiency in comparision to diffusion- and flow-based policies in visuomotor settings, while enjoying markedly lower test time due to the lack of iterative denoising or Ordinary Differential Equation (ODE) integration. However, extending IMLE to \textit{conditional} policies, where the output distribution depends on a sensory input context, poses two key challenges. First, prior methods~\citep{rana2025imle} typically apply rejection sampling \textit{per sample}, comparing generated trajectories only to the paired ground truth. This violates the \textbf{batch-global coverage} principle central to IMLE, often causing the policy to regress toward \emph{averaged or midpoint behaviors} that fail to represent distinct modes in multimodal settings.

Second, closed-loop robotic tasks require strong \textbf{temporal smoothness} (ordered \emph{sequences} of actions executed over time) across action sequences. Local smoothing heuristics (e.g., selecting candidates closest to recent action history) have been used to mitigate mode-switching\footnote{Here, a \textit{mode} corresponds to a distinct high-probability strategy in the expert distribution (e.g., different grasps or paths), and undesirable \textit{mode switching} occurs when the policy abruptly jumps between such valid strategies over time~\citep{rhinehart2019precog, chai2020multipath,rana2025imle}.} during execution~\citep{rana2025imle}, but such methods remain brittle. They introduce mode commitment early in execution, lack global trajectory reasoning, and are sensitive to initial suboptimal choices or re-planning resets.

We introduce \textbf{\methodlong{}} (\method), a \textbf{real-time multisensory imitation policy} validated on a diverse hardware suite including a \textbf{Unitree \textsc{Go2} with a 7-DoF arm D1} for loco-manipulation and a \textbf{UR5 manipulator} for high-precision tabletop tasks. \method{} addresses the limitations of prior work via a batch-global RS-IMLE objective and a temporal attention backbone. The model observes a short history of sensory tokens (RGB, depth, tactile, proprioception) and uses a bank of \emph{learned action query tokens}\footnote{\emph{Learned action queries} are trainable vectors internal to the policy, each query corresponds to one future timestep and is refined by self- and cross-attention.} to attend to this history through \textbf{bidirectional} linear attention (FAVOR$^+$~\citep{choromanski2020rethinking}). In a single batched forward pass, these queries produce $K$ latent-conditioned action sequences. During training, an EMA-calibrated, batch-global rejection mask ensures that each candidate is sufficiently distinct from all ground truth sequences in the minibatch, preserving diversity without iterative sampling. \textbf{The key to smooth, hardware-safe trajectory generation lies in our receding horizon inference strategy}: at each timestep, the model generates $K$ action sequences, selects the most coherent, executes only the first $T_a$ actions, then slides the observation window forward and replans. This continuous replanning mechanism produces trajectories with 50$\times$ reduced jerk compared to iterative sampling methods while maintaining \textbf{30-50\,Hz control} on physical robots.

Our main contributions are as follows:
\begin{itemize}[leftmargin=1.5em]
    \item We present \method{} (\methodlong{}), a \textbf{single-pass policy} that generates full action sequences using temporal cross-attention with linear-time Performer attention and bidirectional self-attention.
    \item We introduce a \textbf{batch-global RS-IMLE} training objective with EMA-calibrated $\varepsilon$ and robust sequence distances, encouraging coverage across minibatch targets without adding test-time sampling steps.
    \item A time-preserving fusion encoder integrates RGB, depth, tactile, proprioception, and audio. Modality-dropout studies identify key signals for robust physical performance, yielding graceful degradation under missing sensors.
    \item We report results on a diverse real-robot suite: \textbf{loco-manipulation using a Unitree \textsc{Go2} (7-DoF arm)} and \textbf{tabletop manipulation using a UR5}. We evaluate challenging tasks including pre-manipulation parking, high-precision peg insertion, and multi-object pick-and-place.
    \item \method{} matches or exceeds state-of-the-art policies across CALVIN, Meta-World, and Robomimic. Importantly, \textbf{real-world tests show 10--25\% higher success rates} compared to diffusion policies while maintaining significantly faster inference. In CALVIN, \method{} improves success by $\sim$25\% over diffusion while \textbf{reducing motion jerk by 20$\times$}, a critical factor for hardware longevity and safety.
\end{itemize}

The remainder of this paper is organized as follows: Sec. \ref{sec:related} focuses on related works. Sec. \ref{sec:problem} positions PRISM within the landscape of generative imitation. Sec. \ref{sec:approach} formalizes our methodology, deriving the batch-global RS-IMLE objective from first principles and establishing its theoretical consistency. Sec. \ref{sec:experiments} presents empirical results across simulation and hardware, analyzing performance, computational efficiency, and failure modes. Theoretical proofs and extensive ablations are detailed in the Appendices.
\section{Related Works}
\label{sec:related}

Behavior cloning (BC) maps observations to actions via supervised learning~\citep{zhang2018deep, florence2019self}, enabling fast single-pass inference for real-time control. Yet, naïve BC with mean squared error often fails in multimodal or contact-rich tasks, averaging diverse behaviors into low-precision outputs. Discretization reformulates control as classification~\citep{zeng2021transporter, wu2020spatial}, but scales poorly in high dimensions. Mixture density networks and behavior transformers~\citep{mandlekar2022matters, shafiullah2022behavior} better capture multimodality but suffer from collapse and instability~\citep{florence2022implicit}. Energy-based implicit models improve precision in multi-valued mappings~\citep{florence2022implicit}, yet explicit BC struggles with mode coverage. \method{} addresses this via batch-global RS-IMLE, promoting diverse modes without averaging, while preserving single-pass efficiency.

Deep generative models, particularly diffusion and flow-based methods, have advanced visuomotor policy learning by modeling complex multimodal actions~\citep{chi2025diffusion, song2021denoising, liu2023flow}. Diffusion models achieve stability and multimodality through iterative denoising, with Diffusion Policy setting state-of-the-art manipulation results~\citep{chi2025diffusion}, but their multi-step inference introduces latency unsuited for real-time control. Accelerations such as DDIM~\citep{song2021denoising}, rectified flows~\citep{liu2023flow}, SE(3)-equivariant transformers~\citep{funk2024actionflow}, and adaptive solvers~\citep{hu2024adaflow} improve efficiency, while Falcon diffusion~\citep{chen2025falcon} and Consistency Policy~\citep{prasad2024consistency} distill or reuse steps to reduce cost. Yet, ensuring smoothness and mode coverage under fewer steps remains difficult. \method{} instead achieves single-pass inference with batch-global rejection sampling, preserving multimodality and temporal coherence without iterative denoising.

Single-step generative policies avoid the cost of iterative sampling. GAN-based methods~\citep{goodfellow2014generative, chen2023graspada} enable efficient inference but often collapse modes, limiting multimodal coverage in visuomotor control. IMLE instead matches each demonstration to a nearest sample, ensuring distributional coverage without adversarial training~\citep{li2018implicit}, while RS-IMLE improves alignment via rejection sampling~\citep{vashist2024rejection}. Yet, existing IMLE variants rely on local heuristics for temporal coherence, leading to mode switching and instability in long-horizon tasks. \method{} addresses this with a batch-global RS-IMLE objective and temporal cross-attention over multisensory inputs, producing multiple coherent trajectory candidates and avoiding premature mode commitment or costly iterative rejection.

Multimodal inputs such as RGB, depth, proprioception, tactile, and audio improve policy robustness by mitigating partial observability and noise~\citep{avigal2022speedfolding, yu2025sketch}. Prior works fuse modalities through concatenation or attention, though few analyze their relative importance under sensor dropout. Temporal smoothness is also essential to prevent jitter from mode switching~\citep{rhinehart2019precog, chai2020multipath}. Attention-based policies enhance long-horizon coherence, with linear-time variants like Performer enabling efficient \mbox{$\mathcal{O}(T,m)$} reasoning~\citep{choromanski2020rethinking}. \method{} fuses per-timestep modalities into fixed-width tokens and applies bidirectional linear attention for real-time, non-autoregressive generation, with ablations showing gripper RGB and proprioception as critical while depth can be redundant.

\section{Problem Statement}
\label{sec:problem}
\textbf{Setting.}
We consider imitation learning from demonstrations with temporally aligned multisensory observations. Each episode provides synchronized sensor streams comprising one or more RGB cameras (always wrist; optionally static view), and optionally depth, tactile, proprioception, and audio. Let $\mathcal{D}=\{(\mathbf{O}^{(n)},\mathbf{A}^{(n)})\}_{n=1}^{N}$ denote $N$ episodes, where $\mathbf{O}^{(n)}$ and $\mathbf{A}^{(n)}$ are sequences of observations and actions.

\textbf{Observations and actions.}
At discrete time $t$, the observation $\mathbf{o}_t$ concatenates all available sensors. The action is $\mathbf{a}_t\in\mathbb{R}^{D_a}$ with each dimension normalized to $[-1,1]$, where $D_a=7$ for tabletop manipulation (end-effector deltas + gripper) and $D_a=14$ for loco-manipulation (absolute pose + base velocities + gripper). Full parameterization details are in Appendix~\ref{app:action_space}.

\textbf{Windows.}
For observation horizon $T_o$ and prediction horizon $T_p$:
\begin{equation*}
\begin{aligned}
\mathbf{O}_{t-T_o+1:t}
&= \{\mathbf{o}_{t-T_o+1}, \ldots, \mathbf{o}_t\}, \\
\mathbf{A}_{t+1:t+T_p}
&= \{\mathbf{a}_{t+1}, \ldots, \mathbf{a}_{t+T_p}\}.
\end{aligned}
\end{equation*}


Our goal is to learn a single-forward-pass, multimodal policy:
\begin{equation}
\begin{aligned}
\label{eq:policy}
\hat{\mathbf{A}}_{t+1:t+T_p} \sim \pi_\theta\!\left(\,\cdot \mid \mathbf{O}_{t-T_o+1:t}, z\,\right),
\end{aligned}
\end{equation}
where $z \sim \mathcal{N}(\mathbf{0}, \mathbf{I})$ is a latent variable.

\textbf{Assumptions.}
(i) Sensors are time-aligned; missing modalities may occur.
(ii) Actions are bounded and normalized.
(iii) Demonstrations are multimodal: multiple valid action sequences may correspond to the same context.

\section{Proposed Approach}
\label{sec:approach}

Our goal is to learn a policy that is \textbf{multimodal} (capturing diverse behaviors), \textbf{temporally consistent} (generating smooth trajectories), and \textbf{efficient} (running in real-time). To achieve this, we introduce PRISM, a single-pass framework that decouples the problem into two stages: a) \textbf{Temporal Multi-Sensory Encoder} that fuses heterogeneous inputs while preserving the temporal dimension ($B, T, \text{features}$),
b) \textbf{Single-Pass Generator} that produces full action sequences in parallel using bidirectional linear attention (FAVOR$^+$~\citep{choromanski2020rethinking}), and
c) \textbf{Batch-Global RS-IMLE} training objective that provides a theoretically grounded mechanism for mode coverage without the inference cost of iterative diffusion. Figure~\ref{fig:method_architecture} provides an overview. Pseudocode for training and receding-horizon inference appears in Alg.~\ref{alg:train} and Alg.~\ref{alg:infer}; stabilized FAVOR$^+$ features are detailed in Algorithm~\ref{alg:favor_impl} (Appendix~\ref{app:architecture}).
\subsection{Temporal multi-sensory encoder}
\label{sec:encoder}

Let $T_o$ be the context horizon and $d$ the model width. For each timestep $t\!\in\!\{1,\dots,T_o\}$ we form per-modality embeddings (wrist RGB, static RGB, depth, tactile, proprioception, audio, text; dataset-dependent) denoted $\{\mathbf{v}^{(m)}_t\}$ with dimensions $d_m$. We fuse \emph{per timestep} to preserve temporal structure:
\begin{equation}
\label{eq:fusion}
\begin{aligned}
\mathbf{c}_t
&= \mathrm{MLP}_{(\sum_m d_m)\rightarrow 1024 \rightarrow d}
\Big(
\big[\, \mathbf{v}^{(m)}_t \,\big]_{m \in \mathcal{M}_t}
\Big)
\in \mathbb{R}^{d},
\end{aligned}
\end{equation}

and stack $\{\mathbf{c}_t\}$ to obtain context tokens $\mathbf{C}\in\mathbb{R}^{T_o\times d}$ with learned absolute positional embeddings. Per-timestep fusion avoids entangling temporal statistics, delegating long-range consistency to the generator. Modality usage per benchmark and detailed ablations appear in (Table~\ref{tab:dataset_modalities}) and (Figure~\ref{fig:modality_robustness}; Section~\ref{sec:experiments} validates that wrist RGB and proprioception are critical while depth is occasionally redundant.

\subsection{Single-pass linear-attention generator (bidirectional)}
\label{sec:generator}
Given $\mathbf{C}$, the generator produces a length-$T_p$ action sequence in one forward pass. We initialize $T_p$ learnable query tokens $\mathbf{Q}\in\mathbb{R}^{T_p\times d}$, add a projected latent $z\!\sim\!\mathcal{N}(\mathbf{0},\mathbf{I})$ (the stochastic seed enabling diverse trajectory candidates) and positional encodings $\mathbf{PE}_{pos}$, then apply $L$ Transformer blocks with (i) \textbf{bidirectional self-attention} over $\mathbf{Q}$ (no causal mask) and (ii) \textbf{cross-attention} to the full context $\mathbf{C}$ (also without causal masking). Both attentions use FAVOR$^+$ (linearized softmax)~\citep{choromanski2020rethinking} with stabilized positive features, reducing cost from $\mathcal{O}(T^2)$ to $\mathcal{O}(T\,m)$ per head (full details in Algorithm~\ref{alg:favor_impl}, Appendix~\ref{app:architecture}):
\begin{equation}
\label{eq:linear_attn}
\begin{aligned}
\mathrm{Attn}(\mathbf{Q}, \mathbf{K}, \mathbf{V})
&\approx
\frac{
\Phi(\mathbf{Q})
\big(\Phi(\mathbf{K})^{\top}\mathbf{V}\big)
}{
\Phi(\mathbf{Q})
\big(\Phi(\mathbf{K})^{\top}\mathbf{1}\big)
+ \varepsilon_a
}, \\[4pt]
\Phi(\mathbf{X})
&=
\frac{
\exp\!\big(\mathbf{X}W - \mathrm{rowmax}(\mathbf{X}W)\big)
}{
\sqrt{m}
}.
\end{aligned}
\end{equation}

where $W\!\sim\!\mathcal{N}(0,1/\sqrt{d})$, $m$ random features, and $\varepsilon_a\!>\!0$ clamps the denominator. A linear head maps final tokens to actions $\hat{\mathbf{A}}\in\mathbb{R}^{T_p\times D_a}$, where $D_a\!\in\!\{7,14\}$ matches Sec.~\ref{sec:problem}.

\paragraph{Why bidirectional and linear?}
The policy is \emph{non-autoregressive}, choosing the entire sequence jointly via bidirectional self-attention over $\mathbf{Q}$ and full cross-attention to $\mathbf{C}$. FAVOR$^+$ enables real-time control: Theorem~1 in \citet{choromanski2020rethinking} proves unbiased softmax estimates with variance $O(1/m)$, preventing error accumulation over long horizons (formal analysis in Appendix~\ref{app:temporal_consistency}). Ablations (Figure~\ref{fig:architecture_ablations}b, Appendix~\ref{app:architecture_ablations}) show higher random-feature budgets improve accuracy; FAVOR$^+$ achieves 1.5--2$\times$ speedup vs. standard attention at equal wall-time (Table~\ref{tab:latency}, Appendix~\ref{app:efficiency}).

\subsection{Robust sequence distance}
\label{sec:robust-distance}
We compare predicted and target sequences with a Charbonnier penalty $\varepsilon_c$ and per-dimension weights $w_d$ (inverse empirical scale), normalized over time to keep values $O(1)$:
\begin{equation}
\begin{aligned}
\label{eq:robust}
D_\rho(\hat{\mathbf{A}},\mathbf{A})
~=~
\frac{1}{T_p}\sum_{t=1}^{T_p}\sum_{d=1}^{\mathbf{D_a}}
w_d\,\sqrt{(\hat a_{t,d}-a_{t,d})^2+\varepsilon_c^2}\,,
\end{aligned}
\end{equation}
with $\varepsilon_c=10^{-6}$. This metric is used both for training and for candidate selection at evaluation (when a learned proxy is not available), and its $1/T_p$ normalization keeps the rejection sampling threshold scale consistent across horizons. The Charbonnier formulation provides robustness to outliers while maintaining differentiability, critical for stable gradient flow in the IMLE objective~\citep{barron2019general}.

\subsection{Batch-global RS–IMLE objective}
\label{sec:rsimle}

For each item $i$ in batch $B$, we draw $K$ latents yielding candidates $\{\hat{\mathbf{A}}^{(k)}_i\}_{k=1}^K$. Let $D_{i,k}\!=\!D_\rho(\hat{\mathbf{A}}^{(k)}_i,\mathbf{A}_i)$ and define batch-global distances $D_{i,k\rightarrow j}\!=\!D_\rho(\hat{\mathbf{A}}^{(k)}_i,\mathbf{A}_j)$ to all targets $j$. We reject candidates too close to \emph{any} target:
\begin{equation}
\begin{aligned}
\label{eq:mask}
\mathbb{I}^{\text{rej}}_{i,k}~=~\mathbb{I}\!\Big[\,\min_{j} D_{i,k\rightarrow j} < \varepsilon_{\text{RS}}\,\Big],
\end{aligned}
\end{equation}
and minimize the hard IMLE loss with fallback:
\begin{equation}
\label{eq:hard_imle}
\begin{aligned}
\mathcal{L}_{\text{hard}}
&=
\frac{1}{B}
\sum_{i=1}^{B}
\min_{k \in \mathcal{K}_i}
D_{i,k}, \\[4pt]
\mathcal{K}_i
&=
\begin{cases}
\{\, k : \mathbb{I}^{\text{rej}}_{i,k} = 0 \,\}, & \text{if nonempty}, \\[2pt]
\{ 1, \dots, K \}, & \text{otherwise}.
\end{cases}
\end{aligned}
\end{equation}

We calibrate $\varepsilon_{\text{RS}}$ via EMA of batch quantile $q\in[0.2,0.35]$ with hard clamps $(\varepsilon_{\min},\varepsilon_{\max})=(10^{-4},0.2)$:
\begin{equation}
\label{eq:eps_calib}
\begin{aligned}
\tilde{\varepsilon}
&=
\mathrm{Quantile}_q\!\big( \{ D_{i,k \rightarrow j} \} \big), \\[4pt]
\varepsilon_{\text{RS}}
&\leftarrow
\mathrm{clip}\!\big(
\alpha\,\varepsilon_{\text{RS}}
+ (1-\alpha)\,\tilde{\varepsilon},
\varepsilon_{\min},
\varepsilon_{\max}
\big).
\end{aligned}
\end{equation}

Lemma~\ref{lem:quantile_consistency} (Appendix~\ref{app:quantile_consistency}) proves this estimator has variance $O(1/N)$ by the Glivenko-Cantelli theorem, with empirical validation in Figure~\ref{fig:batch_calibration_robustness}.

A small top-$K'$ soft-coverage term encourages diversity:
\begin{equation}
\begin{aligned}
\mathcal{L}_{\text{soft}}= -\frac{1}{B}\sum_{i=1}^{B}\log\!\sum_{k\in\mathrm{Top}K'(D_{i,\cdot})}\exp(-D_{i,k}/\tau),
\end{aligned}
\end{equation}
yielding $\mathcal{L}=\mathcal{L}_{\text{hard}}+\lambda_{\text{soft}}\mathcal{L}_{\text{soft}}$ with $\lambda_{\text{soft}}\ll 1$. We use $K'\ll K$ (typically $K'=3$ for $K=16$) to focus gradients on relevant modes. Lemma~\ref{lem:soft_coverage} (Appendix~\ref{app:soft_coverage_theory}) establishes this is equivalent to entropy-regularized $k$-center, preventing mode collapse. The batch-global rejection prevents candidates from covering multiple nearby targets, preserving gradient signal for alternative modes without test-time cost. Theoretical foundations (IMLE's equivalence to kernel-smoothed likelihood maximization, detailed justifications for soft coverage and batch-global rejection) are provided in Appendix~\ref{app:theory}.

{
\paragraph{Why Soft-coverage term?}
The hard-min objective used in standard IMLE propagates gradients only through the single closest candidate. When $K$ is finite and the dataset contains multiple modes, this can bias optimization toward frequent or dense modes because they are more likely to produce the minimal distance. To mitigate this optimization bias, we introduce a mild soft log-sum-exp over the top-$K'$ candidates. This does not alter the underlying IMLE formulation but provides a controlled relaxation in which several nearby candidates contribute gradient signal without averaging across modes. Empirically, in Figure~\ref{fig:hyperparameter_ablations}a (Appendix~\ref{app:architecture_ablations}), decreasing this term consistently reduces performance in multi-modal settings. We therefore view soft coverage as a practical stabilization mechanism rather than a theoretical requirement.
}

\paragraph{Why batch-global rejection?}
Masking by~\eqref{eq:mask} prevents a single candidate from trivially "covering'' many nearby targets across the whole batch, which would otherwise bias gradients toward a unimodal midpoint. This preserves gradient signal for alternative modes without increasing test-time cost. Sensitivity to $K$ and $\varepsilon_{\text{RS}}$ appears in ablations (Figure~\ref{fig:hyperparameter_ablations}, Appendix~\ref{app:architecture_ablations}), showing that performance saturates at $K \approx 16$ candidates. Because $D_{i,k\rightarrow j}$ is computed against all targets in the batch, RS--IMLE remains stable when multimodal clusters do not co-occur. In such cases the quantile naturally contracts to the batch's local density, avoiding suppression of candidates from absent modes.

\subsection{Receding-horizon inference (single pass)}
\label{sec:inference}
At test time we observe $\mathbf{O}_{t-T_o+1:t}$, encode $\mathbf{C}$, draw $K$ latents, and produce $K$ action sequences in one pass. We select a single trajectory using one of two observation-only rules: \textbf{(1) ProxyScore} (when available). If proprioception or wrist pose is observable (e.g., CALVIN, hardware), we compute a simple smoothness proxy: each candidate's first action induces a predicted next end-effector pose, and we select the candidate whose induced pose is closest to the observed pose (or to the previously executed action). This reduces abrupt mode switches while requiring no action labels. \textbf{(2) Deterministic tie-break} (when proxy unavailable). If the dataset does not expose proprioception or EE pose (e.g., Robomimic), ProxyScore cannot be computed. We then select the candidate whose first action is closest (L2) to the last executed action. This encourages temporal consistency without needing ground-truth actions. Both rules operate using only observations, never target actions. We execute the first $T_a$ actions of the selected trajectory and slide the window by $T_a$. Because generation is non-autoregressive and linear-attention based, batched evaluation yields near-constant wall-time for moderate $K$ on modern accelerators. Full hyperparameters, preprocessing, and stability settings are provided in Appendices~\ref{app:training}~and~\ref{app:datasets}.


\begin{algorithm}[t]
\caption{\method{} training: batch-global RS-IMLE with single-pass linear-attention generator}
\label{alg:train}
\footnotesize
\begin{algorithmic}[1]
\Require Dataset $\mathcal{D}=\{(\mathbf{O}^{(n)},\mathbf{A}^{(n)})\}$; horizons $(T_o,T_p)$; candidates $K$; model width $d$; heads $h$; FAVOR$^+$ features $m$
\Require Action weights $\{w_d\}_{d=1}^{D_a}$; Charbonnier $\varepsilon_c$; linear-attention clamp $\varepsilon_a$
\Require RS quantile $q\!\in[0.2,0.35]$, EMA momentum $\alpha\!\in[0,1)$, clamps $(\varepsilon_{\min},\varepsilon_{\max})\!=\!(10^{-4},\,0.2)$
\Require Optional soft coverage $(K',\tau,\lambda_{\text{soft}}\!\ll\!1)$

\State Initialize encoder $E_\phi$, generator $G_\theta$ (with fixed $h,m$), optimizer; initialize $\varepsilon_{\text{RS}}\!>\!0$

\For{\textbf{each} minibatch $\{(\mathbf{O}_i,\mathbf{A}_i)\}_{i=1}^B \sim \mathcal{D}$}
  \State \textbf{Form windows:} for each $i$, take context $\mathbf{O}_{i,t-T_o+1:t}$ and target $\mathbf{A}_{i,t+1:t+T_p}$
  \State \textbf{Preprocess:} RGB/tactile $\rightarrow [0,1]$; depth/proprio/audio $\rightarrow$ \texttt{float32}
  \State \textbf{Encode:} $\mathbf{C}_i \gets E_\phi(\mathbf{O}_{i,t-T_o+1:t}) \in \mathbb{R}^{T_o\times d}$ 
  \State \textbf{Sample latents:} draw $z_{i,k}\sim\mathcal{N}(\mathbf{0},\mathbf{I})$ for $k=1..K$
  
  \State \textbf{Single batched forward:} $\hat{\mathbf{A}}^{(k)}_{i} \gets G_\theta(\mathbf{C}_i, z_{i,k}) \in \mathbb{R}^{T_p\times D_a}$ 
  \Comment{bidirectional self \& cross attention; FAVOR$^+$}
  
  \State \textbf{Per-item robust distances:} 
  $D_{i,k} \gets \tfrac{1}{T_p}\!\sum_{t=1}^{T_p}\sum_{d=1}^{D_a} w_d\,\sqrt{(\hat a^{(k)}_{i,t,d}-a_{i,t,d})^2+\varepsilon_c^2}$
  
  \State \textbf{Batch-global distances:} compute $D_{i,k\rightarrow j} \gets D_\rho\!\big(\hat{\mathbf{A}}^{(k)}_{i},\mathbf{A}_j\big)$ for all $i,k,j$
  
  \State \textbf{Quantile calibration:}
    $\tilde{\varepsilon}\!\gets\!\mathrm{Quantile}_q(\{D_{i,k\rightarrow j}\})$; 
    $\varepsilon_{\text{RS}}\!\leftarrow\!\mathrm{clip}\!\big(\alpha\,\varepsilon_{\text{RS}}+(1{-}\alpha)\,\tilde{\varepsilon},\,\varepsilon_{\min},\,\varepsilon_{\max}\big)$
  \Comment{Lemma~\ref{lem:quantile_consistency}}
  
  \State \textbf{RS mask:} $\mathbb{I}^{\text{rej}}_{i,k}\!\leftarrow\!\mathbb{I}\!\big[\min_j D_{i,k\rightarrow j}<\varepsilon_{\text{RS}}\big]$
  
  \State \textbf{Hard IMLE loss:}
    $\mathcal{K}_i \!\leftarrow\! \{k:\mathbb{I}^{\text{rej}}_{i,k}\!=\!0\}$; if $\mathcal{K}_i\!=\!\emptyset$ set $\mathcal{K}_i\!=\!\{1..K\}$; 
    $\mathcal{L}_{\text{hard}} \gets \frac{1}{B}\sum_{i=1}^{B}\min_{k\in\mathcal{K}_i} D_{i,k}$
  \Comment{Lemma~\ref{lem:implicit_likelihood}}
  
  \If{\textit{soft coverage} enabled}
    \State \textbf{Soft term:} $\mathcal{L}_{\text{soft}} \gets -\frac{1}{B}\sum_i \log \sum_{k\in\mathrm{Top}K'(D_{i,\cdot})}\exp(-D_{i,k}/\tau)$; 
    $\mathcal{L} \gets \mathcal{L}_{\text{hard}} + \lambda_{\text{soft}}\,\mathcal{L}_{\text{soft}}$
  \Comment{Lemma~\ref{lem:soft_coverage}}
  \Else
    \State $\mathcal{L} \gets \mathcal{L}_{\text{hard}}$
  \EndIf
  
  \State \textbf{Update:} $\theta,\phi \leftarrow \theta,\phi - \eta\,\nabla_{\theta,\phi}\mathcal{L}$ \Comment{grad clip; AMP-safe}
  
  \State \textbf{Log:} rejection rate $\tfrac{1}{BK}\sum_{i,k}\mathbb{I}^{\text{rej}}_{i,k}$, current $\varepsilon_{\text{RS}}$, soft/hard loss ratio
\EndFor
\end{algorithmic}
\end{algorithm}


\begin{algorithm}[t]
\caption{Receding-horizon inference with single-pass candidate selection}
\label{alg:infer}
\footnotesize
\begin{algorithmic}[1]
\Require Trained $E_\phi,G_\theta$; horizons $(T_o,T_p,T_a)$; candidates $K$; current time index $t$

\State \textbf{Observe \& preprocess:} $\mathbf{O}_{t-T_o+1:t}$; RGB/tactile$\rightarrow[0,1]$, depth/proprio/audio$\rightarrow$\texttt{float32}

\State \textbf{Encode context:} $\mathbf{C}\gets E_\phi(\mathbf{O}_{t-T_o+1:t}) \in \mathbb{R}^{T_o\times d}$

\State \textbf{Single batched generation:} sample $z_k\!\sim\!\mathcal{N}(\mathbf{0},\mathbf{I})$ and compute $\hat{\mathbf{A}}^{(k)} \gets G_\theta(\mathbf{C}, z_k)$ for $k\!=\!1..K$ 
\Comment{bidirectional self \& cross attention; FAVOR$^+$}

\State \textbf{Select trajectory:} $k^\star \gets \arg\min_k \mathrm{ProxyScore}(\hat{\mathbf{A}}^{(k)}, \mathbf{O}_{t-T_o+1:t})$ if proxy available; 
else use deterministic tie-break; see Sec. \ref{sec:inference}

\State \textbf{Execute \& slide:} apply $\hat{\mathbf{A}}^{(k^\star)}_{1:T_a}$; set $t\!\leftarrow\!t+T_a$; append new observations; repeat
\end{algorithmic}
\end{algorithm}

\section{Experiments}
\label{sec:experiments}
We evaluate \method{} on four established benchmarks: MetaWorld (50 tasks)~\citep{yu2020meta}, CALVIN (multi-sensory manipulation)~\citep{mees2022calvin}, Robomimic (PH)~\citep{mandlekar2022matters}, and a real-robot loco-manipulation suite, to answer: \textbf{RQ1} does sequence-level consistency reduce jerk and mode switches beyond best-of-$K$? \textbf{RQ2} can a single-pass, linear-attention policy meet 30\,Hz as horizons grow? \textbf{RQ3} does multi-sensory fusion (RGB, depth, tactile, proprioception) sustain performance under occlusions/dropouts? \textbf{RQ4} how does \method{} compare to diffusion/flow at matched wall-time? \textbf{RQ5} what is the effect of batch-global rejection, $\varepsilon$-calibration, and candidate count $K$? \textbf{RQ6} do sim trends carry to hardware? Full setup details, fairness controls, and comprehensive per-task breakdowns are provided in Appendices~\ref{app:experiments}--\ref{app:extended_results}. 


\paragraph{MetaWorld (RQ1, RQ2, RQ4).}
Under identical one-step inference (NFE$=1$), \method{} improves success by \textbf{4--5\%} on the Easy/Medium splits and by \textbf{5--12\%} on the Hard/Very-Hard splits over the strongest prior one-step flow baseline (Table~\ref{tab:metaworld_results}). Table~\ref{tab:metaworld_results_extended} (Appendix~\ref{app:tables}) provides the complete 50-task breakdown, including comparisons against DP3~\citep{ze20243d}, ManiCM~\citep{lu2024manicm}, SDM~\citep{jia2024score}, and AdaFlow~\citep{hu2024adaflow}, showing PRISM achieves the highest success rate across all difficulty categories. Relative to Diffusion Policy with 10-step denoising, gains reach \textbf{$\sim$12\%} on the hardest tasks. These margins align with lower jerk/fewer mode switches (RQ1): bidirectional sequence generation avoids myopic, stepwise corrections that iterative samplers often need in contact-rich, multi-phase goals. Because \method{} is single-pass with linear attention, it holds 30\,Hz when $T_o$/$T_p$ increase (RQ2), whereas multi-step policies must trade accuracy for latency (latency analysis in Table~\ref{tab:latency}, Appendix~\ref{app:efficiency}).
\begin{table}[t]
\centering
\caption{
Performance benchmarks with 3D input on MetaWorld (50 tasks).
One-step inference (NFE$=1$) unless noted.
Input: wrist RGB + proprioception; depth rendered from MuJoCo.
Higher success rate (\%) is better.
See Table~\ref{tab:metaworld_results_extended} (Appendix~\ref{app:tables})
for the complete per-task breakdown.
}
\label{tab:metaworld_results}

\setlength{\tabcolsep}{3pt}
\renewcommand{\arraystretch}{1.05}

\resizebox{\columnwidth}{!}{
\begin{tabular}{lccccc}
\toprule
Method &
NFE$\downarrow$ &
Easy$\uparrow$ &
Med.$\uparrow$ &
Hard$\uparrow$ &
V-Hard$\uparrow$ \\
\midrule
Diffusion Policy \citep{chi2025diffusion}
& 10 & 83.6 & 31.1 & 9.0  & 26.6 \\
Flow Matching Policy \citep{black2024pi_0}
& 1 & 61.4 & 20.6 & 13.4 & 36.0 \\
IMLE Policy \citep{rana2025imle}
& 1 & 75.6 & 60.4 & 43.5 & 76.0 \\
\rowcolor{OursRow}
\method
& \textbf{1} & \textbf{96.4} & \textbf{85.5} & \textbf{58.0} & \textbf{85.8} \\
\bottomrule
\end{tabular}
}
\end{table}

\paragraph{CALVIN (RQ1--RQ3--RQ4)}
Training on just \textbf{10\%} of Env~D, \method{} improves success by \textbf{$\sim$10\%} over IMLE Policy, \textbf{$\sim$28\%} over Diffusion Policy, and \textbf{$\sim$20\%} over Flow Matching Policy at matched budgets (Table~\ref{tab:calvin_success}). Motion quality improves substantially: jerk is reduced by \textbf{$\sim$20$\times$} (from 1.05 to 0.05) and mode switches by \textbf{$\sim$3$\times$} (from 0.28 to 0.10), indicating that batch-global RS-IMLE preserves multimodality while maintaining temporal coherence (RQ1/RQ4).

\method{} is also robust to sensor degradation. Under occlusion and single-modality dropouts, performance drops by only \textbf{3-5\%} relative to nominal conditions, whereas baselines degrade by 8-15\% (Table~\ref{tab:calvin_success}). Figure~\ref{fig:modality_robustness} provides a detailed breakdown. Removing wrist RGB or proprioception causes the largest degradation (41.5\% and 15.8\% success, respectively), while depth removal has minimal effect (66.2\% success, comparable to the 65.2\% no-dropout baseline). Pairwise ablations further confirm that wrist RGB and proprioception form the critical sensing backbone: removing both leads to near-complete failure. These results validate RQ3: per-timestep multimodal fusion reliably exploits wrist RGB and proprioception, while depth is occasionally redundant in our setup. Time-matched multi-step samplers (Diffusion Policy with NFE$=10$, Flow Matching with NFE$=5$) remain slower and exhibit higher jerk even when wall-clock time is controlled (Table~\ref{tab:latency}).

\begin{table*}[t]
\centering
\caption{
CALVIN: success, robustness, and motion quality (mean $\pm$ s.e.; $K{=}8$).
Input: wrist RGB + static RGB + depth + tactile + proprioception (10\% of Env D).
Modality-dropout results are shown in Figure~\ref{fig:modality_robustness}.
}
\label{tab:calvin_success}

\footnotesize
\setlength{\tabcolsep}{3pt}
\renewcommand{\arraystretch}{0.98}

\resizebox{\textwidth}{!}{%
\begin{tabular}{lcccccc}
\toprule
Method &
No-RGB$\uparrow$ &
Depth$\uparrow$ &
Tact.$\uparrow$ &
Prop.$\uparrow$ &
Jerk$\downarrow$ &
Switch$\downarrow$ \\
\midrule
Diffusion Policy \citep{chi2025diffusion}
& 36.4$\pm$9.2 & 38.0$\pm$3.5 & 34.4$\pm$7.5 & 35.2$\pm$7.2 & 3.783 & 0.515 \\
Flow Matching \citep{black2024pi_0}
& 44.4$\pm$3.7 & 45.6$\pm$2.7 & 45.6$\pm$4.1 & 42.0$\pm$3.5 & 1.148 & 0.586 \\
IMLE Policy \citep{rana2025imle}
& 56.2$\pm$6.5 & 55.2$\pm$6.2 & 51.2$\pm$10.3 & 54.2$\pm$5.7 & 1.053 & 0.276 \\
\rowcolor{OursRow}
\method
& \textbf{65.2$\pm$4.3} & \textbf{62.0$\pm$5.8}
& \textbf{66.2$\pm$4.8} & \textbf{62.4$\pm$5.7}
& \textbf{0.052} & \textbf{0.101} \\
\bottomrule
\end{tabular}%
}
\end{table*}


\begin{figure*}[t]
\centering
\includegraphics[width=0.33\textwidth]{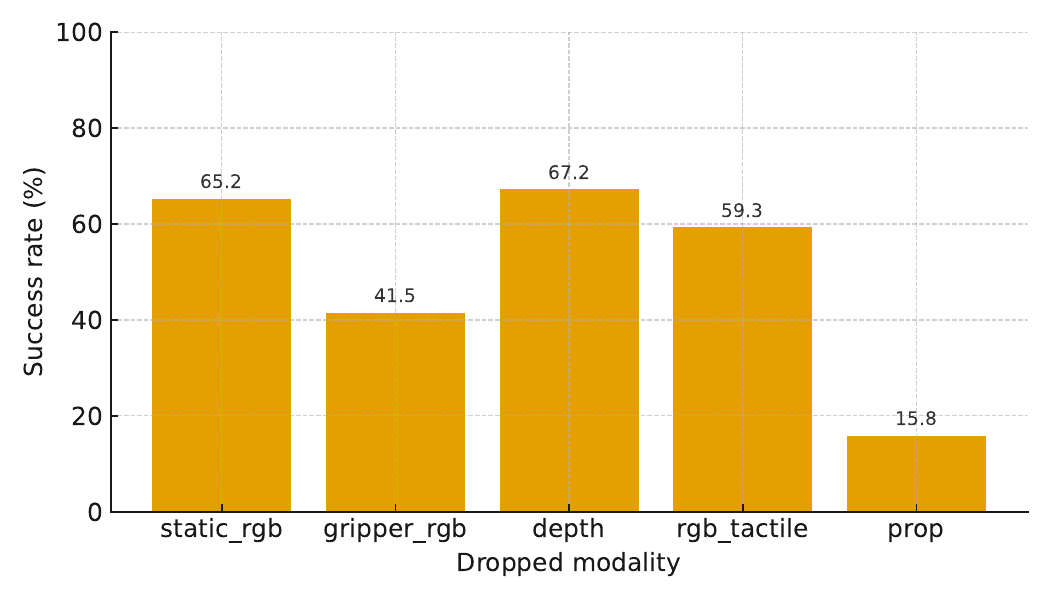}\hfill
\includegraphics[width=0.33\textwidth]{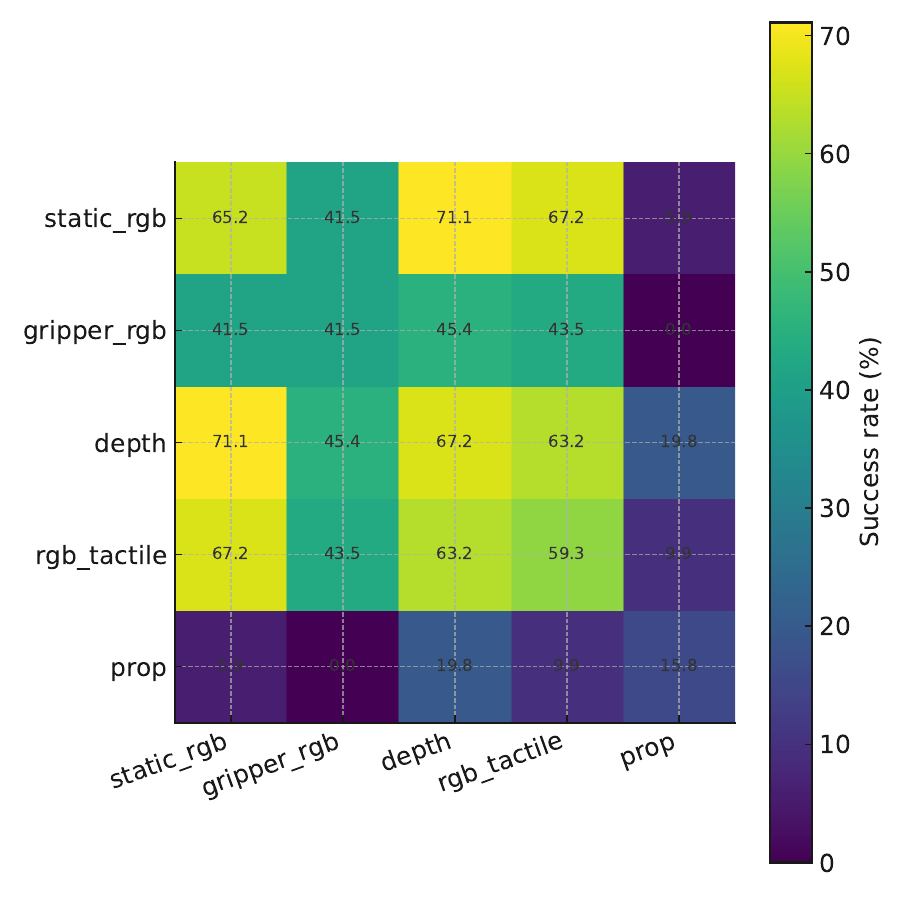}\hfill
\includegraphics[width=0.33\textwidth]{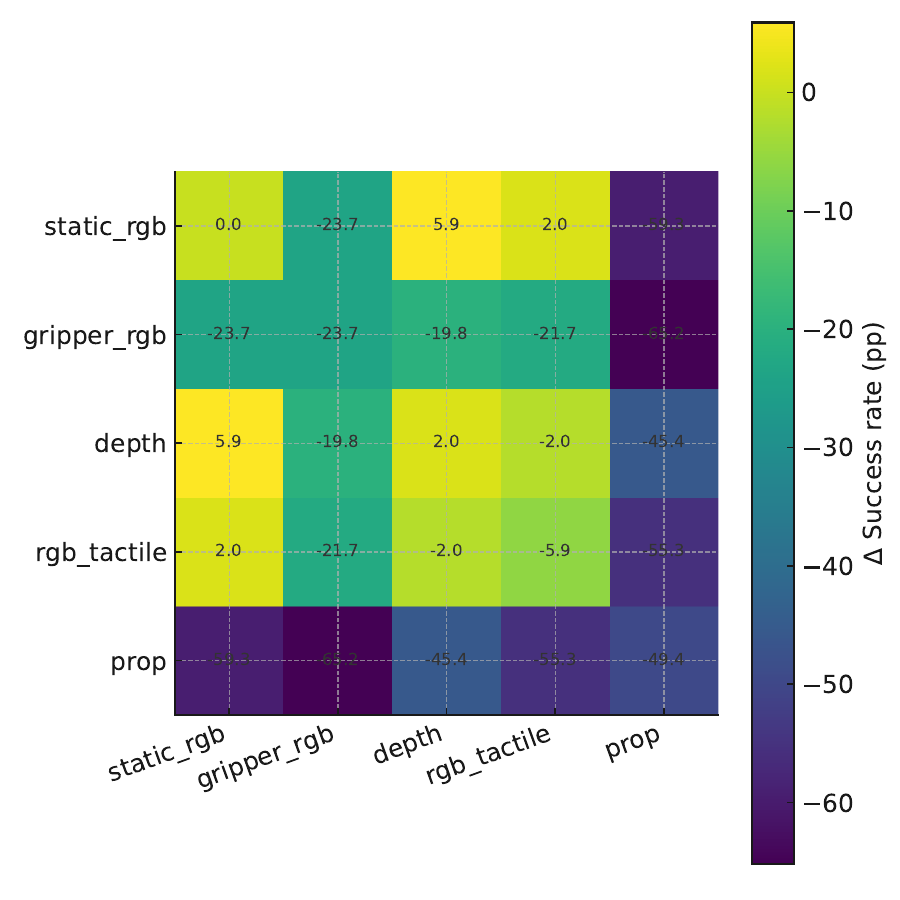}

\caption{\textbf{Modality robustness on CALVIN.}
\textbf{Left:} Single-modality dropouts during evaluation.
\textbf{Middle:} Pairwise modality dropouts.
\textbf{Right:} Performance change relative to the full-modality baseline.
Wrist RGB and proprioception are the most critical modalities; removing both leads to near-complete failure, while PRISM degrades gracefully under single-modality removal.}
\label{fig:modality_robustness}
\end{figure*}



\paragraph{Robomimic (PH) (RQ4).}
On image-based tasks, \method{} remains \emph{single-pass} yet matches or surpasses several multi-step policies on average success (Table~\ref{tab:robomimic_results}). Against strong one-step baselines (Consistency Policy~\citep{prasad2024consistency}) and prior IMLE variants, \method{} is ahead by \textbf{2--5\%} on average; versus AdaFlow (NFE=$\sim$1.2), \method{} is competitive. Table~\ref{tab:robomimic_extended_results} (Appendix~\ref{app:tables}) provides the complete comparison including RectifiedFlow~\citep{liu2023flow}, ActionFlow~\citep{funk2024actionflow}, Falcon-DDPM~\citep{chen2025falcon}, and other multi-step baselines. This supports that coherent, sequence-level generation can substitute for iterative refinement at comparable wall-time.  We note that AdaFlow (NFE=$\sim$1.2) slightly outperforms PRISM on PH tasks (95.6\% vs. 92.2\% average success). We attribute this to the near-unimodal nature of PH demonstrations and the absence of depth, tactile, or multi-view RGB, which limits the benefits of PRISM's multisensory encoder. AdaFlow's variance-adaptive step-size is specifically optimized for unimodal accuracy, whereas PRISM excels when multimodality is present, as evidenced by the superior performance on CALVIN (Table~\ref{tab:calvin_success}) and MetaWorld Hard/Very-Hard splits (Table~\ref{tab:metaworld_results}).

\begin{table*}[t]
\centering
\caption{
Robomimic (PH): success by task (mean $\pm$ s.e.).
One-step inference unless noted (NFE shown).
Input: wrist RGB + proprioception (image-based control,
200 demonstrations per task).
Extended comparison with multi-step baselines in
Table~\ref{tab:robomimic_extended_results} (Appendix~\ref{app:tables}).
}
\label{tab:robomimic_results}

\setlength{\tabcolsep}{4pt}
\renewcommand{\arraystretch}{1.05}

\resizebox{\textwidth}{!}{%
\begin{tabular}{lccccccc}
\toprule
Method &
NFE$\downarrow$ &
Lift$\uparrow$ &
Can$\uparrow$ &
Sq.$\uparrow$ &
Trans.$\uparrow$ &
Tool$\uparrow$ &
Avg.$\uparrow$ \\
\midrule
Diffusion Policy \citep{chi2025diffusion}
& 15 & 1.00 & 0.99$\pm$0.01 & 0.92$\pm$0.03 & 0.79$\pm$0.04 & 0.55$\pm$0.05 & 0.85 \\
Flow Matching \citep{black2024pi_0}
& 1 & 0.99$\pm$0.01 & 0.96$\pm$0.01 & 0.79$\pm$0.02 & 0.66$\pm$0.05 & 0.86$\pm$0.05 & 0.85 \\
IMLE Policy \citep{rana2025imle}
& 1 & 1.00 & 0.94$\pm$0.01 & 0.81$\pm$0.01 & 0.85$\pm$0.05 & 0.75$\pm$0.06 & 0.87 \\
\rowcolor{OursRow}
\method
& \textbf{1} & \textbf{1.00} & \textbf{1.00}
& \textbf{0.84$\pm$0.02} & \textbf{0.91$\pm$0.02}
& \textbf{0.86$\pm$0.03} & \textbf{0.922} \\
\bottomrule
\end{tabular}%
}
\end{table*}

\paragraph{Real-World Evaluation (RQ2--RQ6)}
\label{sec:real}

We deploy \method{} on a Unitree \textsc{Go2} with a D1 arm and parallel gripper, using history $H{=}8$, horizon $T{=}16$, chunk $T_a{=}8$, and $K{=}5$ candidates/step. \method{} outperforms baselines by \textbf{10--25\%} across pre-manipulation parking, peg-in-hole, and pick-and-place tasks, with high inference speed (\textbf{RQ2}, \textbf{RQ6}). Performance scales with demonstration count: +10\% (parking), +20\% (peg-in-hole), and +30\% (pick-and-place) when increasing from 15 to 35 demos as shown in Figure~\ref{fig:combined_real_world}. Additional tabletop manipulation experiments (cup stacking, can picking) demonstrate consistent 15--20\% performance advantages over baselines (Figure~\ref{fig:combined_real_world}, with PRISM maintaining 50\,Hz inference across all scenarios. Hardware setup details, including sensor configurations and task specifications, are provided in Appendix~\ref{app:hardware} (Figures~\ref{fig:hardware_locomanip}--\ref{fig:segmentation_pipeline}). The consistent 10--25\% improvement across all real-world tasks demonstrates that \method{}'s benefits translate from simulation to hardware, with the 50\,Hz sustained performance addressing \textbf{RQ2} (single-pass efficiency) in real-world constraints. The demonstration scaling (10--30\% improvement) shows that \method{} effectively leverages additional training data.

\begin{figure*}[t]
    \centering
    \begin{minipage}{0.44\textwidth}
        \centering
        \includegraphics[width=\linewidth]{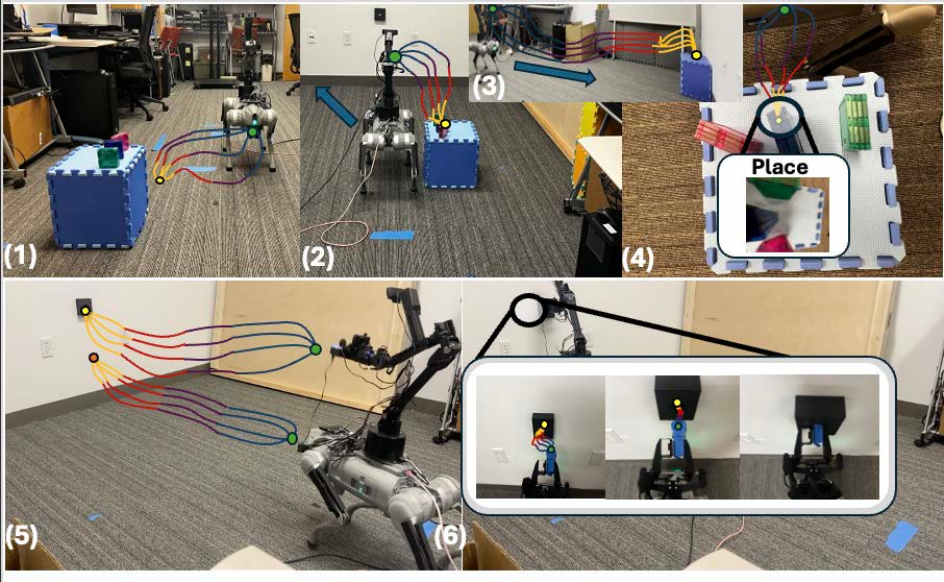}
        \label{fig:loco_setup}
    \end{minipage}
    \begin{minipage}{0.52\textwidth}
        \centering
        \includegraphics[width=\linewidth]{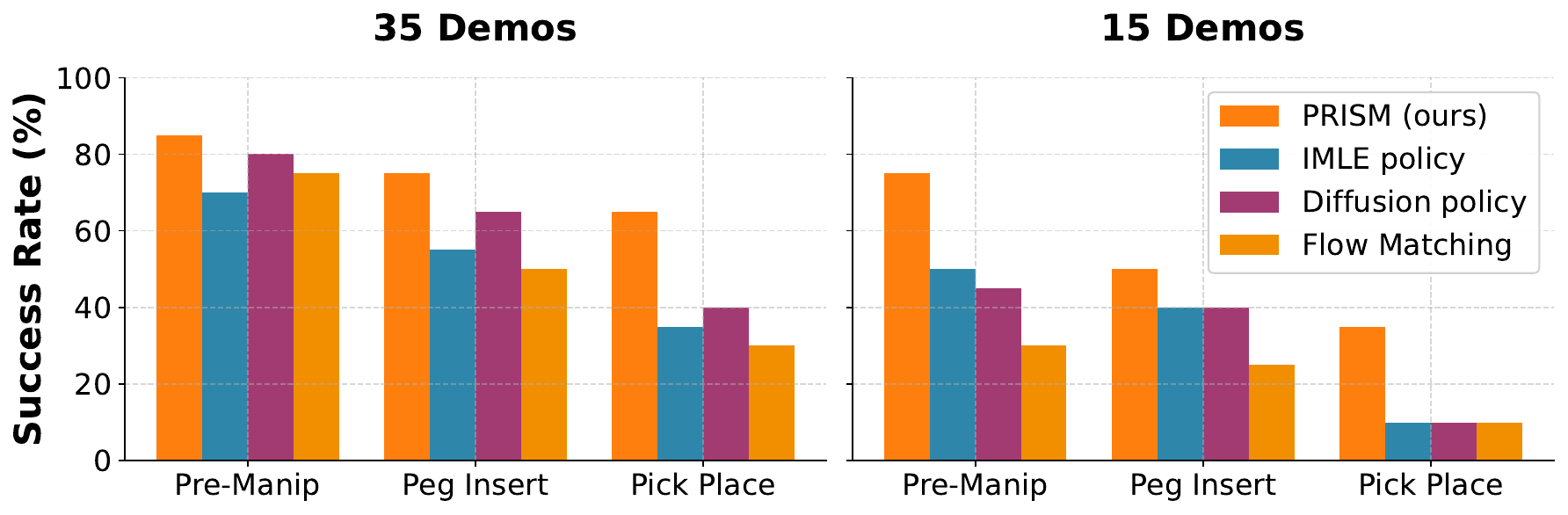}
        \label{fig:loco_eval}
    \end{minipage}

    \vspace{0.8em} 

    \begin{minipage}{0.49\textwidth}
        \centering
        \includegraphics[width=\linewidth]{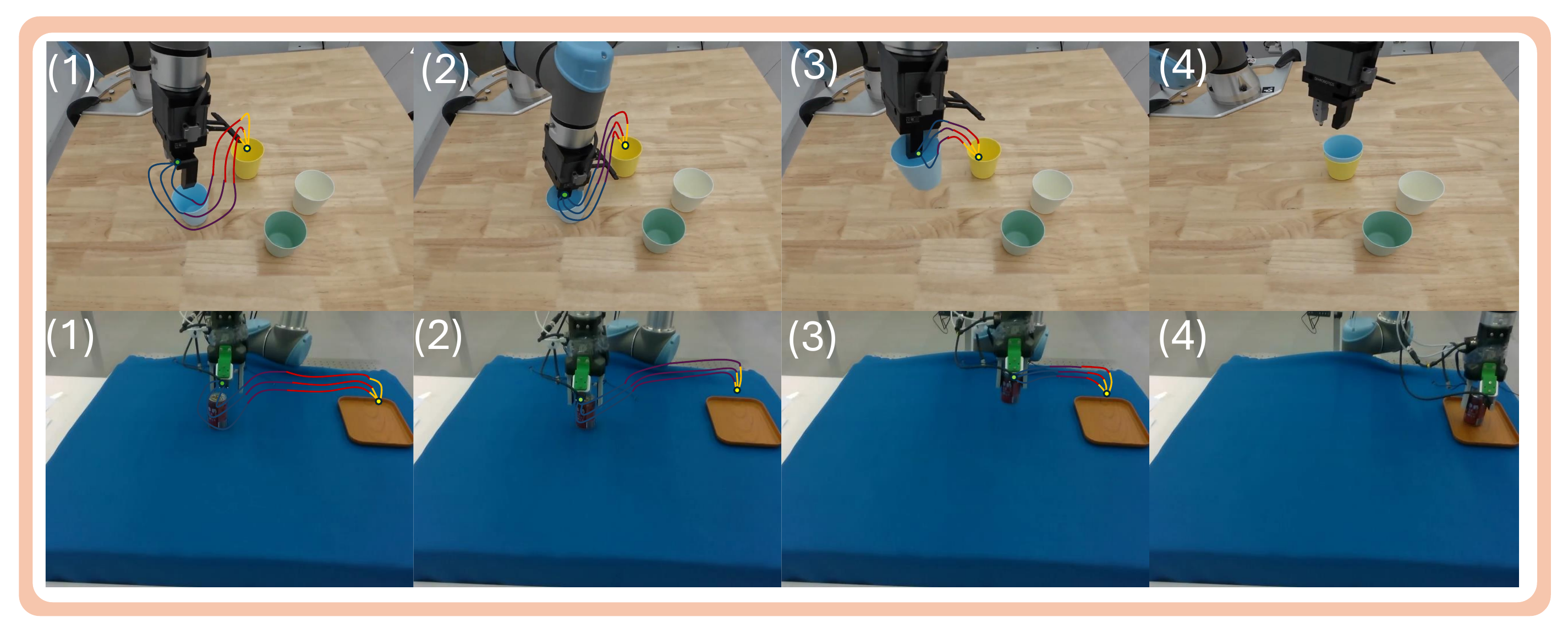}
        \label{fig:tabletop_setup}
    \end{minipage}
    \begin{minipage}{0.49\textwidth}
        \centering
        \includegraphics[width=\linewidth]{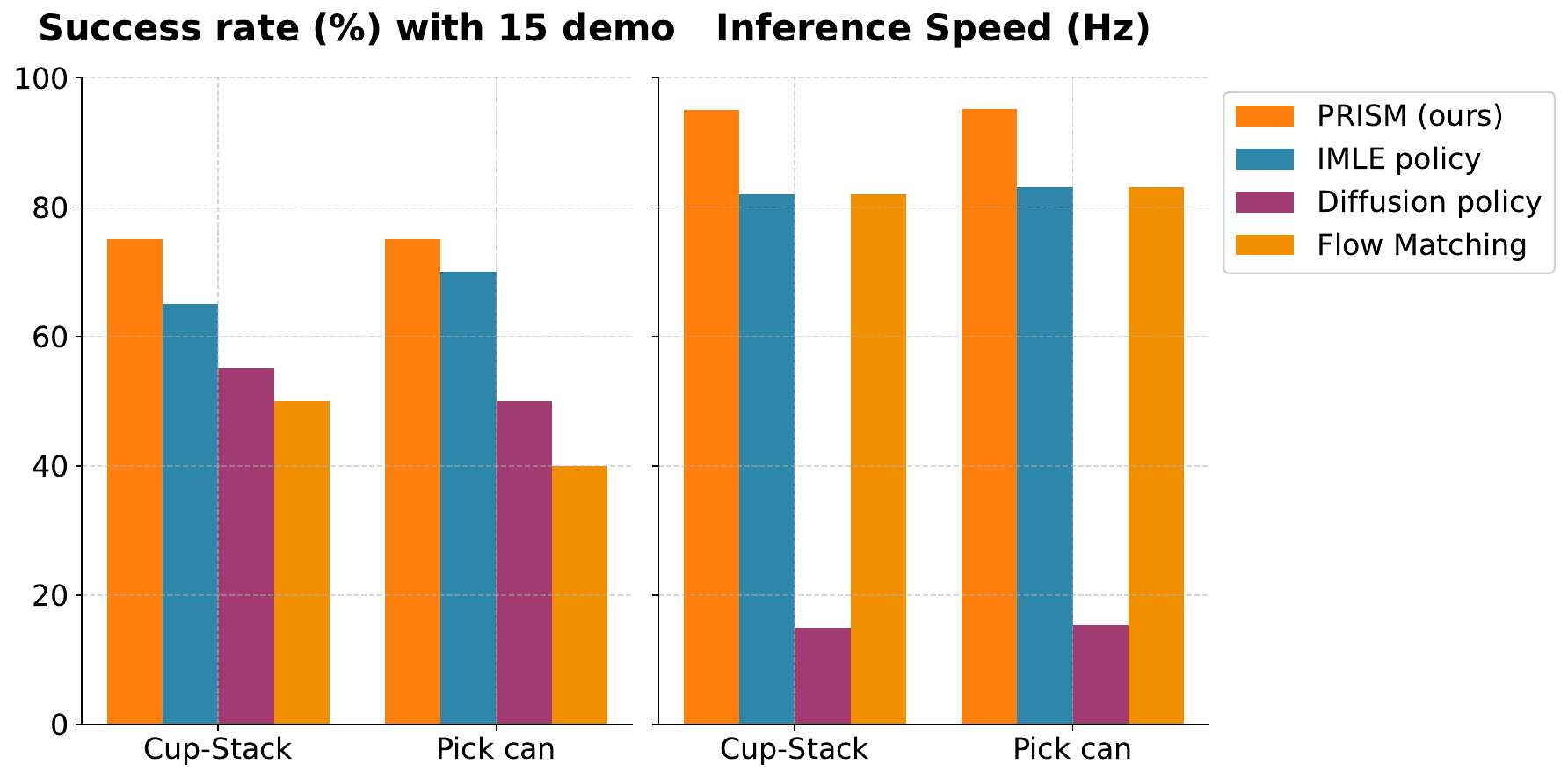}
        \label{fig:tabletop_eval}
    \end{minipage}

    \caption{\textbf{Real-World Hardware Setup and Performance.} 
    \textbf{Top Left:} Loco-manipulation platform (Unitree Go2+D1) and tasks including parking, pick-and-place, and insertion. 
    \textbf{Top Right:} Loco-manipulation success rates (50 trials/method); \method{} maintains 30\,Hz while achieving higher success. 
    \textbf{Bottom Left:} Tabletop manipulation tasks showing nested containers and precise can placement. 
    \textbf{Bottom Right:} Tabletop success rates and inference speed ablation for tasks (1) and (2).}
    \label{fig:combined_real_world}
\end{figure*}

\textbf{Architecture (RQ2, RQ5).} Comprehensive ablations (Figures~\ref{fig:architecture_ablations}--\ref{fig:hyperparameter_ablations}, Appendix~\ref{app:architecture_ablations}) show performance saturates at $H{=}8$ heads, benefits from increased random features up to $m{=}512$, and shows diminishing returns beyond $K{\approx}16$ candidates. Batch-global rejection with EMA $\varepsilon$ calibration reduces midpoint collapse (lower mode-switch rate) and improves success; a small top-$K'$ coverage term ($\lambda_{\text{soft}}{\approx}0.02$) adds a boost without hurting single-candidate success. As proven in Lemma~\ref{lem:quantile_consistency} (Appendix~\ref{app:quantile_consistency}), the batch-global quantile estimator variance decreases as $O(1/N)$. Figure~\ref{fig:batch_calibration_robustness} empirically validates this, showing PRISM maintains stable $\varepsilon_{\text{RS}}$ calibration across batch sizes 8--512, with coefficient of variation dropping below 5\% at batch size 64. 
Note, we provide additional insights on the efficiency of PRISM and a failure analysis in Appendix. \ref{architecture_analysis}.

\section{Limitations}
\label{sec:limitations}
While \method{} offers a powerful framework for single-pass, multisensory visuomotor policy learning with significant improvements in accuracy, smoothness, and real-time control, several limitations remain. We do not use pretrained encoders to ensure fair comparison with diffusion, flow, and IMLE baselines. Incorporating pretrained visual or multi-modal encoders is a promising direction for future work, especially for language- or audio-conditioned tasks. \method{} relies on a batch–global $\varepsilon_{\text{RS}}$ calibrated from minibatch distances, which couples samples and can be sensitive to batch size and dataset heterogeneity; miscalibration risks either under-rejection (mode averaging) or over-rejection (slow optimization). Developing adaptive, data-driven rejection thresholds could mitigate this sensitivity in future work. Lastly, persistent sensor failures present challenges despite the encoder’s robustness to intermittent dropout; integrating explicit uncertainty estimation by taking advantage of our fast inference time through ensemble encoders or multiple latent draws or online sensor diagnostics could further enhance resilience.
\section{Conclusion}
\label{sec:conclusion}
We presented \method{}, a single-pass visuomotor policy that fuses multisensory history, generates full action sequences with bidirectional Performer attention, and preserves multimodal coverage via batch–global RS–IMLE. Across benchmarks, \method{} is accurate, smooth, and fast: on MetaWorld (NFE$=1$) it improves over strong flows by \textbf{4-5\%} on average and by up to \textbf{$\sim$5\%} on Hard/Very-Hard, and exceeds diffusion variants by \textbf{$\sim$20\%}; on CALVIN, using only \textbf{10\%} of the data, it outperforms IMLE-Policy by \textbf{$\sim$10\%}, flow by \textbf{$\sim$20\%}, and diffusion by \textbf{$\sim$25\%} with markedly lower jerk; and on real loco-manipulation and manipulation, it improves success by \textbf{10-25\%} while sustaining real-time inference. Future work includes adaptive rejection control, and learned feature maps for linear attention, and RL fine-tuning for generalization.




\IEEEpeerreviewmaketitle

\bibliographystyle{plainnat}
\bibliography{references}
\newpage
\appendix

\section{Experimental Setup and Implementation Details}
\label{app:setup}
\alexadd{This appendix provides full details of the benchmarks, modalities, baseline implementations, and cross-benchmark differences used in all PRISM experiments. This section consolidates information that was previously scattered across Sections 4–5, and responds to reviewer requests for a unified, transparent description of how baselines were adapted for each dataset.}

 \subsection{Benchmark Specifications}
\label{app:benchmarks}
We evaluate PRISM across four widely used visuomotor imitation-learning benchmarks: MetaWorld, CALVIN, Robomimic (PH), and a real-world loco-manipulation suite.  This section provides implementation details omitted from the main paper, including dataset
preprocessing, modality configurations, and baseline reproduction protocols. Our goal is to make experimental comparisons reproducible and consistent across settings.

\subsubsection{MetaWorld (50 Tasks)}
\label{app:metaworld}
\alexadd{We adopt the official MetaWorld MT50 benchmark \citep{yu2020meta}, following the standard data-collection pipeline used in diffusion-, flow-, and IMLE-based visuomotor policies.
We use wrist RGB images and proprioception for all experiments; depth is rendered from MuJoCo but not used during training unless specified. All demonstrations are frame-aligned and downsampled
to 20--30\,Hz. Observation windows use $T_o\!=\!4$ and prediction horizons use $T_p\!=\!16$. For baseline reproduction, we follow the settings in Diffusion Policy, Flow Matching Policy, IMLE Policy, ManiCM, AdaFlow, and DP3.  
Each baseline is trained using the same dataset splits and action parameterization (7D end-effector
deltas + gripper). For one-step baselines (Flow Matching, IMLE, Consistency-FM), we verify their
reported NFE configurations. Multi-step diffusion baselines use $10$-step DDPM or Diffusion Poliy unless
explicitly stated. This ensures fair wall-clock comparisons.}

\subsubsection{CALVIN (Multimodal Manipulation)}
\label{app:calvin}
\alexadd{CALVIN provides wrist-camera RGB, depth, static-camera RGB, tactile, and proprioception \citep{mees2022calvin}. PRISM along with all the baselines uses all available modalities unless ablations specify dropouts.  Baselines (Diffusion Policy, Flow Matching, IMLE Policy) are re-trained using the same observation and control rate as PRISM.  All sensor streams are temporally aligned at 30\,Hz and 
processed using the same normalization scheme across baselines. PRISM uses the temporal multisensory encoder to fuse all available per-timestep modalities; we do not introduce any dataset-specific architectural changes. Depth is included by 
default unless removed in ablations. Because CALVIN exposes proprioception, PRISM uses the ProxyScore candidate-selection rule during receding-horizon inference, allowing smooth and consistent rollouts without requiring access to ground-truth actions. Baseline methods (Diffusion Policy, Flow Matching Policy, and IMLE Policy) are re-trained under matched observation spaces, control rates, 
and action parameterization to ensure fair comparison.}

\subsubsection{Robomimic (Proficient Human Demonstrations)}
\label{app:robomimic}
\alexadd{We use the Proficient Human (PH) datasets for Lift, Can, Square, Transport, and Toolhang \citep{mandlekar2022matters}. All policies operate on wrist RGB and proprioception. Demonstrations are temporally aligned at 20\,Hz. We follow the same evaluation
protocol used by SDM\citep{jia2024score}, Consistency Policy\citep{prasad2024consistency}, evaluating 50
initialization seeds per task and reporting mean success. Baselines include Diffusion Policy, Consistency Policy, Consistent-FM,
IMLE Policy, SDM, and ActionFlow. When baseline code differs in observation normalization (e.g., per-sequence vs.\ global statistics), we adapt preprocessing to match the respective paper.}

\subsubsection{Real-World Loco-Manipulation and tabletop manipulation}
\label{app:realworld}
\alexadd{We evaluate PRISM on a Unitree GO2 equipped with a D1 7-DoF arm, wrist RGB, shoulder RGB,
vision-based tactile sensing, audio, and proprioception. Sensor streams are time-aligned using ROS
timestamps and synchronized to 30\,Hz.  
We use three tasks—pre-manipulation parking, peg insertion, and pick-and-place—each using
15-35 teleoperated demonstrations. For all locomanipulation tasks we use vision and tactile. For tabletop manipulation task shown in Figure~\ref{fig:combined_real_world} in the bottom row, we use vision and tactile as well. For top row tabletop manipulation task in Figure~\ref{fig:combined_real_world} we use vision and audio. We also added new experimets on language conditioned tasks shown in Figure~\ref{fig:text_examples}, where we've used vision and text. Baselines are re-implemented following the same inference-time assumptions: Diffusion Policy (10-step denoising), Flow Matching (1-step), and IMLE Policy (1-step).} 

\subsubsection{Baseline Methods and Abbreviations}
\label{app:baseline_abbrev_full}
\alexadd{To keep result tables compact, we use consistent abbreviations across benchmarks:  
``MW'' denotes MetaWorld, ``CV'' denotes CALVIN, and ``RM'' denotes Robomimic (PH).  
All competing methods are re-trained under matched observation modalities, control rate,
action parameterization, and data budgets. For multi-step baselines, we report main results using
their recommended number of function evaluations (NFE). For one-step baselines, evaluation
faithfully follows the settings described in Consistency-FM\citep{yang2024consistency}, ensuring that PRISM is
compared against the strongest available accelerations.} \alexadd{We reimplemented baselines only when their setup differed from ours; otherwise, we use originally reported numbers under matching conditions.} 



\subsubsection{Dataset Details}
\label{app:datasets}

Table~\ref{tab:dataset_modalities} summarizes the modality availability across benchmarks. All datasets include wrist-mounted RGB as the primary visual input.

\begin{table}[t]
\centering
\caption{Modality availability per benchmark.
$\checkmark$: available and used; —: not available; $\Delta$: task-dependent.}
\label{tab:dataset_modalities}

\setlength{\tabcolsep}{4pt}
\renewcommand{\arraystretch}{1.15}

\begin{tabular}{lcccccc}
\toprule
Benchmark &
\shortstack{Wrist\\RGB} &
\shortstack{Static\\RGB} &
Depth &
Tactile &
Proprio &
Audio \\
\midrule
MetaWorld & \checkmark & — & — & — & \checkmark & — \\
CALVIN & \checkmark & \checkmark & \checkmark & \checkmark & \checkmark & — \\
Robomimic & \checkmark & — & — & — & \checkmark & — \\
Real Hardware & \checkmark & \checkmark & — & $\Delta$ & \checkmark & $\Delta$ \\
\bottomrule
\end{tabular}
\end{table}

\paragraph{Data Preprocessing.}
\begin{itemize}[leftmargin=1.5em, itemsep=0pt]
\item \textbf{Visual inputs}: Center-cropped and resized (200${\times}$200 for static, 84${\times}$84 for wrist), normalized to [0,1]
\item \textbf{Depth}: Cast to \texttt{float32}, no normalization
\item \textbf{Proprioception}: Per-dimension z-normalization using training statistics
\item \textbf{Tactile}: 6-channel force/torque readings normalized to [0,1]
\item \textbf{Temporal alignment}: All modalities synchronized to camera timestamps (30 Hz)
\end{itemize}

\subsubsection{Action Space Parameterization}
\label{app:action_space}

\textbf{Tabletop manipulation ($D_a=7$):}
\begin{itemize}
\item Translation increments: $(\Delta x, \Delta y, \Delta z)$ in meters
\item Rotation increments: $(\Delta r_x, \Delta r_y, \Delta r_z)$ in radians (small-angle Euler)
\item Gripper: continuous open/close command $\in[-1,1]$
\end{itemize}
All increments are expressed in the robot base frame.

\textbf{Loco-manipulation ($D_a=14$):}
\begin{itemize}
\item Gripper command: $\in[-1,1]$
\item End-effector position: $(x, y, z)$ in meters
\item End-effector orientation: $(q_w, q_x, q_y, q_z)$ unit quaternion
\item Base linear velocity: $(v_x, v_y, v_z)$ in m/s (body frame)
\item Base angular velocity: $(\omega_x, \omega_y, \omega_z)$ in rad/s (body frame)
\end{itemize}

Sensor observations are synchronized to camera timestamps at 30 Hz. Frame conventions follow each dataset's standard (see dataset-specific details in Section~\ref{app:datasets}).

\subsubsection{Evaluation Protocols}
\label{app:evaluation}

Across all benchmarks, we follow the standard evaluation protocols used in prior visuomotor policy work. For MetaWorld, we evaluate all 50 tasks stratified into Easy/Medium/Hard/Very-Hard splits, using 10 demonstrations per task to measure data efficiency and reporting success as task completion within episode limits. CALVIN is evaluated under the multi-task setting using 10\% of the Environment~D dataset, with Success@K computed by selecting the best of $K$ candidates using task-specific proxy functions. For Robomimic, we use the Proficient-Human (PH) split with 200 demonstrations per task and perform image-based control on Lift, Can, Square, Transport, and Tool-Hang. Real-world evaluation consists of 50 trials per method under randomized initial conditions with a 60-second timeout, covering pre-manipulation parking, peg insertion, and pick-and-place tasks with varying object geometries.


\subsection{Model Architecture}
\label{app:architecture}

\paragraph{Temporal Multi-Sensory Encoder.}
The encoder processes each modality independently before temporal fusion. \alexadd{We utilize a ResNet-18 backbone for RGB inputs to leverage pre-trained features, while other modalities use lightweight custom encoders:}

\begin{itemize}[leftmargin=1.5em, itemsep=0pt]
\item \textbf{\alexadd{RGB streams}} (wrist $\mathbf{v}^{\text{rgb-w}}_t$, static $\mathbf{v}^{\text{rgb-s}}_t$): \alexadd{ResNet-18 backbone (pre-trained on ImageNet), removing the final FC layer and projecting features to $d{=}512$.}

\item \textbf{Depth streams} (wrist $\mathbf{v}^{\text{dep-w}}_t$, static $\mathbf{v}^{\text{dep-s}}_t$): Three convolutional layers (1→64→128→256 channels) with ReLU activations, adaptive pooling to $2{\times}2$, and linear projection to $d{=}256$.

\item \textbf{Tactile} ($\mathbf{v}^{\text{tac}}_t$): Six-channel input processed through identical CNN architecture (3 layers), output dimension 256.

\item \textbf{Proprioception} ($\mathbf{v}^{\text{prop}}_t$): Two-layer MLP (input→128→64) processing joint states and velocities.
\end{itemize}

Per-timestep fusion preserves temporal structure:
\begin{equation}
\label{eq:fusion_explicit}
\begin{aligned}
\mathbf{c}_t
&=
\mathrm{MLP}_{(\sum_m d_m)\rightarrow 1024 \rightarrow d}
\Big(
\big[
\mathbf{v}^{\text{rgb-s}}_t;
\mathbf{v}^{\text{rgb-w}}_t;
\mathbf{v}^{\text{dep-s}}_t; \\[-2pt]
&\qquad
\mathbf{v}^{\text{dep-w}}_t;
\mathbf{v}^{\text{tac}}_t;
\mathbf{v}^{\text{prop}}_t
\big]
\Big).
\end{aligned}
\end{equation}

\paragraph{Single-Pass Generator.}
The generator employs $L{=}6$ transformer blocks with bidirectional self-attention over $T_p$ learned query tokens and cross-attention to context $\mathbf{C}$. Both attention mechanisms use FAVOR$^+$ linearization:

\begin{algorithm}[h]
\caption{FAVOR$^+$ Linear Attention Implementation}
\label{alg:favor_impl}
\begin{algorithmic}[1]
\Require Query $\mathbf{Q}$, Key $\mathbf{K}$, Value $\mathbf{V} \in \mathbb{R}^{L \times d_h}$
\Require Random features $\mathbf{W} \in \mathbb{R}^{d_h \times m}$, $W_{ij} \sim \mathcal{N}(0, 1/\sqrt{d_h})$
\Function{StabilizedFeatures}{$\mathbf{X}$}
    \State $\mathbf{U} \leftarrow \mathbf{X}\mathbf{W}$
    \State $\mathbf{U} \leftarrow \mathbf{U} - \text{rowmax}(\mathbf{U})$ \Comment{Numerical stabilization}
    \State \Return $\exp(\mathbf{U}) / \sqrt{m}$
\EndFunction
\Function{LinearAttention}{$\mathbf{Q}, \mathbf{K}, \mathbf{V}$}
    \State $\boldsymbol{\phi}_Q \leftarrow$ \Call{StabilizedFeatures}{$\mathbf{Q}$}
    \State $\boldsymbol{\phi}_K \leftarrow$ \Call{StabilizedFeatures}{$\mathbf{K}$}
    \State $\mathbf{S} \leftarrow \boldsymbol{\phi}_K^\top \mathbf{V}$, $\mathbf{n} \leftarrow \boldsymbol{\phi}_K^\top \mathbf{1}$
    \State \Return $(\boldsymbol{\phi}_Q \mathbf{S}) \oslash (\boldsymbol{\phi}_Q \mathbf{n} + \varepsilon_a)$
\EndFunction
\end{algorithmic}
\end{algorithm}

\paragraph{Comment on $\varepsilon_{\mathrm{RS}}$ and $K$.} \alexadd{Because $D_{i,k\rightarrow j}$ is computed against all targets in the batch, RS--IMLE remains stable when multimodal clusters do not co-occur. In such cases the quantile naturally contracts to the batch's local density, avoiding suppression of candidates from absent modes.}

\subsection{Training Configuration}
\label{app:training}

\paragraph{Batch-Global RS-IMLE.}
The training objective combines hard IMLE loss with optional soft coverage:
\begin{align}
\mathcal{L}_{\text{hard}} &= \frac{1}{B} \sum_{i=1}^{B} \min_{k \in \mathcal{K}_i} D_\rho(\hat{\mathbf{A}}_i^{(k)}, \mathbf{A}_i) \\
\mathcal{L}_{\text{soft}} &= -\frac{1}{B} \sum_{i=1}^{B} \log \sum_{k \in \text{Top}_{K'}(D_{i,\cdot})} \exp(-D_{i,k}/\tau) \\
\mathcal{L}_{\text{total}} &= \mathcal{L}_{\text{hard}} + \lambda_{\text{soft}} \mathcal{L}_{\text{soft}}
\end{align}

where $\mathcal{K}_i$ contains unrejected candidates based on batch-global distances, and the rejection threshold $\varepsilon_{\text{RS}}$ is dynamically calibrated via EMA of batch quantiles.

\paragraph{Optimization Details.}
\begin{itemize}[leftmargin=1.5em, itemsep=0pt]
\item \textbf{Optimizer}: AdamW with learning rate $10^{-4}$ (MetaWorld, Robomimic) or $5{\times}10^{-5}$ (CALVIN)
\item \textbf{Batch size}: 128 with mixed precision training
\item \textbf{Gradient clipping}: Optional at 1.0 for stability
\item \textbf{Learning rate schedule}: Cosine annealing with warmup
\item \textbf{RS-IMLE calibration}: Quantile $q{\in}[0.2, 0.35]$, EMA momentum $\alpha{=}0.9$
\item \textbf{Robust distance}: Charbonnier with $\varepsilon_c{=}10^{-6}$, per-dimension weights from inverse training statistics
\end{itemize}

\subsection{Mathematical Notation}
\label{app:notation}

Table~\ref{tab:notation} provides a comprehensive reference for notation used throughout the paper.
\begin{table}[t]
\centering
\caption{Mathematical notation and definitions.}
\label{tab:notation}

\setlength{\tabcolsep}{3pt}
\renewcommand{\arraystretch}{1.1}
\small

\begin{tabular}{p{0.28\columnwidth} p{0.65\columnwidth}}
\toprule
\textbf{Symbol} & \textbf{Description} \\
\midrule

$\mathcal{D}=\{(\mathbf{O}^{(n)},\mathbf{A}^{(n)})\}_{n=1}^N$
& Dataset of $N$ demonstration episodes \\

$\mathbf{o}_t \in \mathbb{R}^{D_o}$
& Multisensory observation at time $t$ \\

$\mathbf{a}_t \in [-1,1]^{D_a}$
& Normalized action (7D tabletop, 14D loco-manipulation) \\

$T_o, T_p, T_a$
& Observation, prediction, and action execution horizons \\

$\mathbf{C} \in \mathbb{R}^{T_o \times d}$
& Temporally fused context tokens \\

$\mathbf{Q} \in \mathbb{R}^{T_p \times d}$
& Learned query tokens for action generation \\

$G_\theta$
& FAVOR$^{+}$ transformer generator with parameters $\theta$ \\

$D_\rho(\cdot,\cdot)$
& Robust Charbonnier sequence distance \\

$\varepsilon_{\text{RS}}$
& Batch-global rejection threshold (EMA-calibrated) \\

$K, K'$
& Number of candidates; top-$K'$ used for soft coverage \\

$m$
& Number of random features for FAVOR$^{+}$ linearization \\

\bottomrule
\end{tabular}
\end{table}


\section{Extended Experimental Results and Ablations}
\label{app:experiments}
This appendix provides comprehensive ablation studies and extended experimental results supporting the main paper claims, including architectural design choices, and computational efficiency measurements.

\subsection{Push-T Multimodality Study}
\label{app:pushT_multimodality}

\alexadd{To visualize how different policies behave in states with varying inherent multi-modality, we follow the Push-T setup from \citet{rana2025imle}. We fix the T-shaped object and gradually sweep the robot end-effector start position along the top edge of the T. States near the corners are effectively unimodal (most demonstrations push in a consistent direction), whereas states near the center exhibit high multi-modality (demonstrations split between left and right pushes).}

Figure~\ref{fig:pushT_multimodality} \alexadd{compares PRISM, the UNet-based IMLE baseline, Diffusion Policy, and Flow-Matching Policy. Each cell shows multiple rollouts from a fixed start state, with trajectories color-coded over time.}

\begin{figure*}[t]
    \centering
    \includegraphics[width=0.75\linewidth]{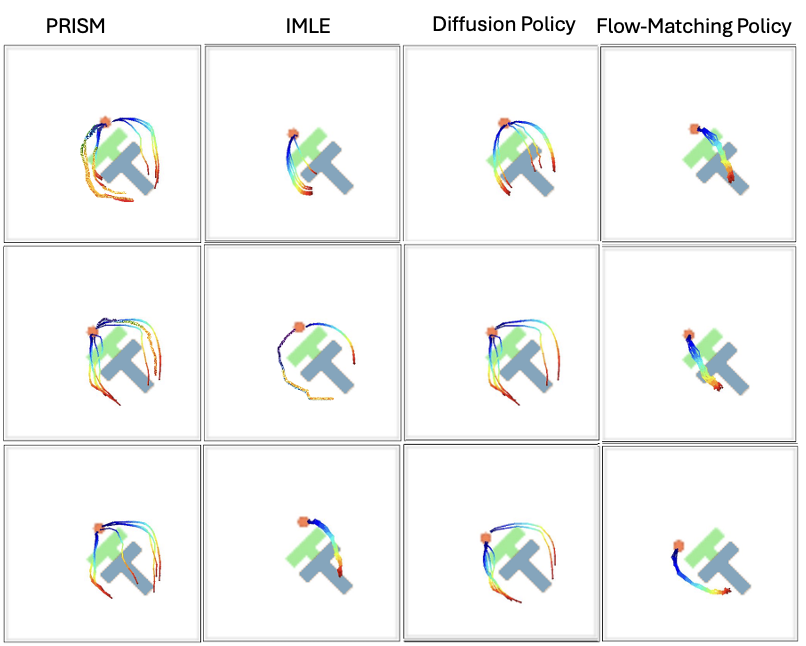}
    \caption{\textbf{Push-T multimodality visualization.}
    \alexadd{Columns sweep the end-effector start position along the top edge of the T-block from left to right. Each cell overlays multiple trajectories from a given policy. \textbf{PRISM} maintains diverse yet smooth trajectory families, especially in the ambiguous central region. The IMLE baseline shows fewer distinct modes, Diffusion Policy produces more noisy branches, and Flow-Matching remains effectively unimodal across all start states.}}
    \label{fig:pushT_multimodality}
\end{figure*}
\subsection{Architecture and Hyperparameter Ablations}
\label{app:architecture_ablations}
We conduct systematic ablations on CALVIN to validate our architectural choices and hyperparameter selections. Figure~\ref{fig:architecture_ablations} presents comprehensive ablations of key architectural components (attention heads, FAVOR$^+$ features, transformer depth, candidate count $K$), while Figure~\ref{fig:hyperparameter_ablations} analyzes training hyperparameters (soft coverage weight, RS-IMLE calibration, observation horizon). These studies inform our default configuration choices and demonstrate the robustness of our approach across reasonable hyperparameter ranges.
\begin{figure*}[t]
\centering
\includegraphics[width=0.95\linewidth]{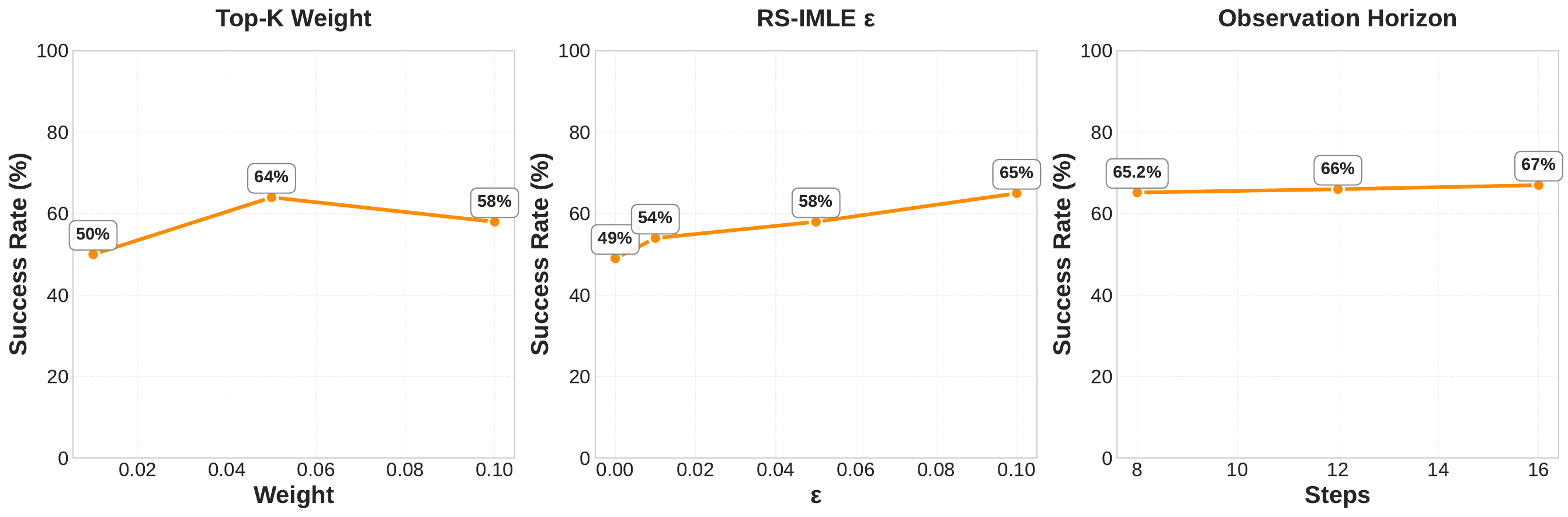}
\caption{\textbf{Training hyperparameter ablations on CALVIN.} Effect of (a) soft coverage weight $\lambda_{\text{soft}}$, (b) batch-global RS-IMLE threshold calibration, and (c) observation horizon on performance. The EMA-calibrated threshold and small coverage term ($\lambda_{\text{soft}}=0.02$) provide optimal trade-off between diversity and accuracy.}
\label{fig:hyperparameter_ablations}
\end{figure*}
\begin{figure*}[t]
\centering
\includegraphics[width=0.95\linewidth]{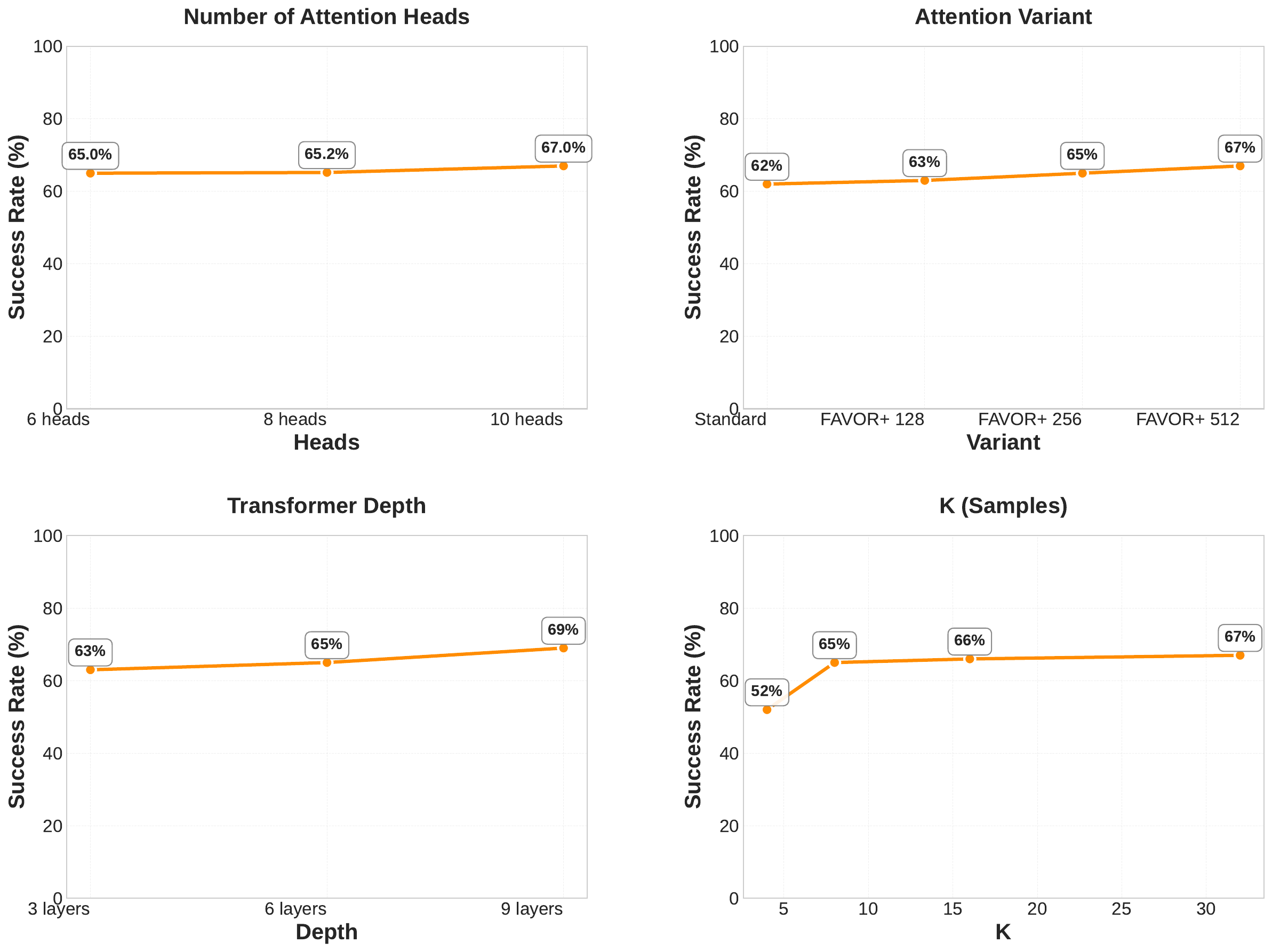}
\caption{\textbf{Architecture ablations on CALVIN.} Impact of (a) attention heads, (b) FAVOR$^+$ random features, (c) transformer depth, and (d) number of latent candidates $K$ on success rate. Performance saturates at $H=8$ heads, benefits from increased random features up to $m=512$, improves with depth, and shows diminishing returns beyond $K=16$ candidates.}
\label{fig:architecture_ablations}
\end{figure*}


\subsection{Architecture Analysis}
\label{architecture_analysis}
\textbf{Efficiency.} PRISM's architectural efficiency stems from two design choices: linear attention complexity $O(T\cdot m)$ via FAVOR$^+$ and a compact encoder-generator architecture. Compared to UNet-based diffusion models, PRISM achieves 48\% fewer parameters (44.4M vs. 91.8M) and 45\% smaller memory footprint (169.4 MB vs. 350.4 MB) than UNet baselines (Table~\ref{tab:model_efficiency}, Appendix~\ref{app:efficiency}). Mean and p99 inference latency scale near-linearly with observation horizon $T_o$ and prediction horizon $T_p$ under linear attention, remaining below 33\,ms for our reported settings (Table~\ref{tab:latency}, Appendix~\ref{app:efficiency}), enabling real-time 30 Hz control.

\textbf{Failure analysis.} \alexadd{Detailed breakdown (Appendix~\ref{app:failure_analysis}) reveals that approximately 34\% of CALVIN failures stem from mode-switching instability (10--15\%) and high jerk events (5--10\%), rather than invalid action generation. This confirms that the primary challenge in multimodal imitation learning is trajectory consistency, which PRISM addresses through batch-global RS-IMLE and temporal cross-attention. These studies collectively address \textbf{RQ1}--\textbf{RQ5}; comprehensive ablation figures, per-task breakdowns, and extended baseline comparisons are provided in Appendices~\ref{app:experiments}--\ref{app:extended_results}.}

\paragraph{Ablations and Analysis (RQ1--RQ5)}
\label{sec:ablations}

\textbf{Multimodal behavior (RQ1).} \alexadd{Push-T evaluation (Figure~\ref{fig:pushT_multimodality}, Appendix~\ref{app:pushT_multimodality}) confirms PRISM correctly maintains two distinct trajectory modes (left vs. right push) in ambiguous regions, while Flow Matching Policy averages these into a suboptimal straight push, validating that batch-global RS-IMLE preserves multimodality without mode collapse.}
\begin{table}[t]
\centering
\caption{
Model size and efficiency comparison.
PRISM achieves higher performance with fewer parameters
and a smaller memory footprint than baseline architectures.
}
\label{tab:model_efficiency}

\setlength{\tabcolsep}{3pt}
\renewcommand{\arraystretch}{1.08}
\small

\begin{tabular}{l r r r p{0.22\columnwidth}}
\toprule
\textbf{Model} &
\textbf{Params} &
\textbf{MB} &
\textbf{Succ.} &
\textbf{Type} \\
\midrule
\method
& \textbf{44.4} & \textbf{169.4} & \textbf{66.0} & Linear transformer \\
UNet RS-IMLE
& 91.8 & 350.4 & 58.8 & CNN diffusion \\
Diffusion Policy
& 79.9 & 305.0 & 62.0 & CNN diffusion \\
Flow Matching
& 79.9 & 305.0 & 61.5 & CNN flow \\
\bottomrule
\end{tabular}
\end{table}


\begin{table}[t]
\centering
\caption{Inference latency (ms) with varying horizons on NVIDIA A100.
Lower is better; 30\,Hz requires $\leq$33\,ms.}
\label{tab:latency}

\setlength{\tabcolsep}{4pt}
\renewcommand{\arraystretch}{1.1}

\resizebox{\columnwidth}{!}{%
\begin{tabular}{lcccccc}
\toprule
Method &
\multicolumn{2}{c}{$T_o{=}8,\;T_p{=}16$} &
\multicolumn{2}{c}{$T_o{=}16,\;T_p{=}32$} &
30\,Hz? \\
\cmidrule(lr){2-3} \cmidrule(lr){4-5}
& Mean & p99 & Mean & p99 & \\
\midrule
Diffusion Policy (NFE{=}10)
& 142.3 & 168.5 & 285.7 & 342.1 & No \\
Flow Matching (NFE{=}5)
& 73.8 & 89.2 & 147.6 & 181.3 & No \\
\rowcolor{OursRow}
\method\ (K{=}8)
& \textbf{15.0} & \textbf{16.0}
& \textbf{31.2} & \textbf{35.8} & Yes \\
\bottomrule
\end{tabular}%
}
\end{table}
\subsection{Computational Efficiency}
\label{app:efficiency}
We measure inference latency across varying temporal horizons to validate PRISM's real-time capability. All measurements use an NVIDIA A100 GPU with batch size $K=8$ candidates, averaged over 1000 forward passes with warmup. Table~\ref{tab:latency} reports mean and 99th percentile (p99) latencies, demonstrating that PRISM maintains sub-33ms inference even at extended horizons, unlike iterative sampling methods that scale linearly with the number of denoising steps.

Table~\ref{tab:latency} reports inference latency as observation and prediction horizons scale. $^*$At $T_o=16, T_p=32$, PRISM operates at ~22 Hz (p99), still suitable for many manipulation tasks.

\section{Extended Results}
\label{app:extended_results}

\vspace{0.5em} 
\subsection{Extended Tables}
\label{app:tables}
This section provides complete per-task breakdowns for MetaWorld (50 tasks) and Robomimic (5 tasks) experiments reported in the main paper. All results include mean $\pm$ standard error across identical random seeds, with 5 evaluation runs per task. Tables~\ref{tab:metaworld_results_extended} and \ref{tab:robomimic_extended_results} present one-step (NFE=$1$) and multi-step baseline comparisons, demonstrating PRISM's consistent performance advantages across diverse manipulation challenges. MetaWorld per-task results (50 tasks) and Robomimic per-task/time-matched comparisons are provided in the
supplemental (.zip), with mean and standard errors and identical seed splits.

\renewcommand{\arraystretch}{1.4}
\everymath{\displaystyle}

\begin{table*}[t]
\centering
\caption{MetaWorld (50 tasks). One-step (NFE$=1$) unless noted. Higher Success rate (\%) is better.}
\label{tab:metaworld_results_extended}
\setlength{\tabcolsep}{8pt}
\resizebox{\textwidth}{!}{
\begin{tabular}{lccccc}
\toprule
\textbf{Method} & \textbf{NFE $\downarrow$} & \textbf{Easy (28) $\uparrow$} & \textbf{Medium (11) $\uparrow$} & \textbf{Hard (6) $\uparrow$} & \textbf{Very Hard (5) $\uparrow$} \\
\midrule
Diffusion Policy \citep{chi2025diffusion} & 10 & 83.6 & 31.1 & 9.0 & 26.6 \\
DP3* \citep{ze20243d} & 10 & 89.0 & 72.7 & 38.0 & 75.8 \\
ManiCM \citep{lu2024manicm} & 1 & 83.6 & 55.6 & 33.3 & 67.0 \\
SDM \citep{jia2024score} & 1 & 86.5 & 65.8 & 35.8 & 71.6 \\
FlowPolicy* \citep{zhang2025flowpolicy} & 1 & 92.1 & 73.6 & 46.2 & 80.0 \\
AdaFlow* \citep{hu2024adaflow} & 1 & 50.6 & 19.1 & 12.6 & 32.3.0 \\
\color{black}FlowMatching Policy* \citep{black2024pi_0} & 1& 61.4 & 20.6 & 13.4 & 36.0\\
\color{black}IMLE \citep{rana2025imle} & 1& 75.6 & 60.4 & 43.5 & 76.0 \\
\rowcolor{OursRow}
\method & \textbf{1} & \textbf{96.4} & \textbf{85.5} & \textbf{58.0} & \textbf{85.8} \\
\bottomrule
\end{tabular}
}
\end{table*}

\begin{table*}[t]
\centering
\caption{Robomimic (PH) — success by task (mean $\pm$ s.e.); one-step unless noted (NFE shown).}
\label{tab:robomimic_extended_results}
\setlength{\tabcolsep}{6pt}
\resizebox{\textwidth}{!}{%
\begin{tabular}{lccccccc}
\toprule
\textbf{Method} & \textbf{NFE} & \textbf{Lift} & \textbf{Can} & \textbf{Square} & \textbf{Transport} & \textbf{Tool Hang} & \textbf{Average} \\
\midrule
\multicolumn{8}{l}{\textit{Multi-step policies}}\\
\midrule
Diffusion Policy \citep{chi2025diffusion} & 15 & 1.00 & 0.99 $\pm$ 0.01 & 0.92 $\pm$ 0.03 & 0.79 $\pm$ 0.04 & 0.55 $\pm$ 0.05 & 85.0 \\
RectifiedFlow* \citep{liu2023flow} & 15 & 1.00 & 0.96 $\pm$ 0.02 & 0.90 $\pm$ 0.02 & 0.84 $\pm$ 0.04 & \textbf{0.90 $\pm$ 0.02} & 92.0 \\
ActionFlow \citep{funk2024actionflow} & 10 & 1.00 & 0.96 $\pm$ 0.02 & \textbf{0.93 $\pm$ 0.02} & -- & 0.51 $\pm$ 0.02 & -- \\
Falcon-DDPM \citep{chen2025falcon} & 12--52 & 1.00 & 0.97 $\pm$ 0.02 & \textbf{0.95 $\pm$ 0.02} & 0.85 $\pm$ 0.04 & 0.55 $\pm$ 0.05 & 86.4 \\
Falcon-DDiM \citep{chen2025falcon} & 6--10 & 1.00 & \textbf{1.00 $\pm$ 0.00} & 0.91 $\pm$ 0.03 & 0.74 $\pm$ 0.04 & 0.51 $\pm$ 0.05 & 83.2 \\
AF \citep{hu2024adaflow} & 2 & 1.00 & \textbf{1.00} & \textbf{0.98} & \textbf{0.92} & \textbf{0.88} & \textbf{95.6} \\
CP* \citep{prasad2024consistency} & 3 & 1.00 & 0.95 $\pm$ 0.02 & \textbf{0.96 $\pm$ 0.01} & 0.88 $\pm$ 0.02 & 0.77 $\pm$ 0.03 & 91.2 \\
\midrule
\multicolumn{8}{l}{\textit{One-step policies}}\\
\midrule
DDiM \citep{chi2025diffusion} & 1 & 0.04 & 0.00 $\pm$ 0.00 & 0.00 $\pm$ 0.00 & 0.00 $\pm$ 0.00 & 0.00 $\pm$ 0.00 & 0.8 \\
CP* \citep{prasad2024consistency} & 1 & 1.00 & 0.98 $\pm$ 0.01 & 0.92 $\pm$ 0.02 & 0.78 $\pm$ 0.03 & 0.70 $\pm$ 0.03 & 89.6 \\
Consistent-FM* \citep{yang2024consistency} & 1 & 1.00 & 0.94 $\pm$ 0.02 & 0.90 $\pm$ 0.01 & 0.84 $\pm$ 0.02 & 0.80 $\pm$ 0.02 & 89.0 \\
Flow Matching policy*\citep{black2024pi_0} &1 & 0.99$\pm$ 0.01 & 0.96$\pm$ 0.01 & 0.79$\pm$ 0.02 & 0.66$\pm$ 0.05 & 0.86$\pm$ 0.05& 0.85\\
IMLE* \citep{rana2025imle} & 1 & 1.00 & 0.94 $\pm$ 0.01 & 0.81 $\pm$ 0.01 & 0.85 $\pm$ 0.05 & 0.75 $\pm$ 0.06 & 87.0 \\
\rowcolor{OursRow}
\method & \textbf{1} & \textbf{1.00} & \textbf{1.00} & 0.84 $\pm$ 0.02 & \textbf{0.91 $\pm$ 0.02} & \textbf{0.86 $\pm$ 0.03} & \textbf{92.2} \\
\bottomrule
\end{tabular}%
}
\vspace{-0.5em}
\end{table*}

\section{Hardware Deployment}
\label{app:hardware}
To validate the simulation results in the physical world, we deployed PRISM on two distinct robotic platforms, demonstrating its versatility across embodiments and sensor suites:

\begin{enumerate}[leftmargin=1.5em]
    \item \textbf{Mobile Manipulation (Unitree GO2):} A quadruped robot equipped with a Unitree D1 7-DoF arm. The sensor suite includes wrist and shoulder RGB cameras, tactile sensor and proprioception. We evaluate this platform on loco-manipulation tasks such as pre-manipulation and peg-insertion. (Figure~\ref{fig:hardware_locomanip}).
    
    \item \textbf{Tabletop Manipulation (UR3e):} A fixed-base 6-DoF arm equipped with a wrist-mounted RGB camera, a \textbf{DIGIT} high-resolution tactile sensor, and a contact microphone. (Figure~\ref{fig:combined_real_world}).
\end{enumerate}

Figure~\ref{fig:combined_real_world} summarizes the quantitative success rates across these diverse real-world scenarios.

\begin{figure*}[t]
\centering
\includegraphics[width=0.95\linewidth]{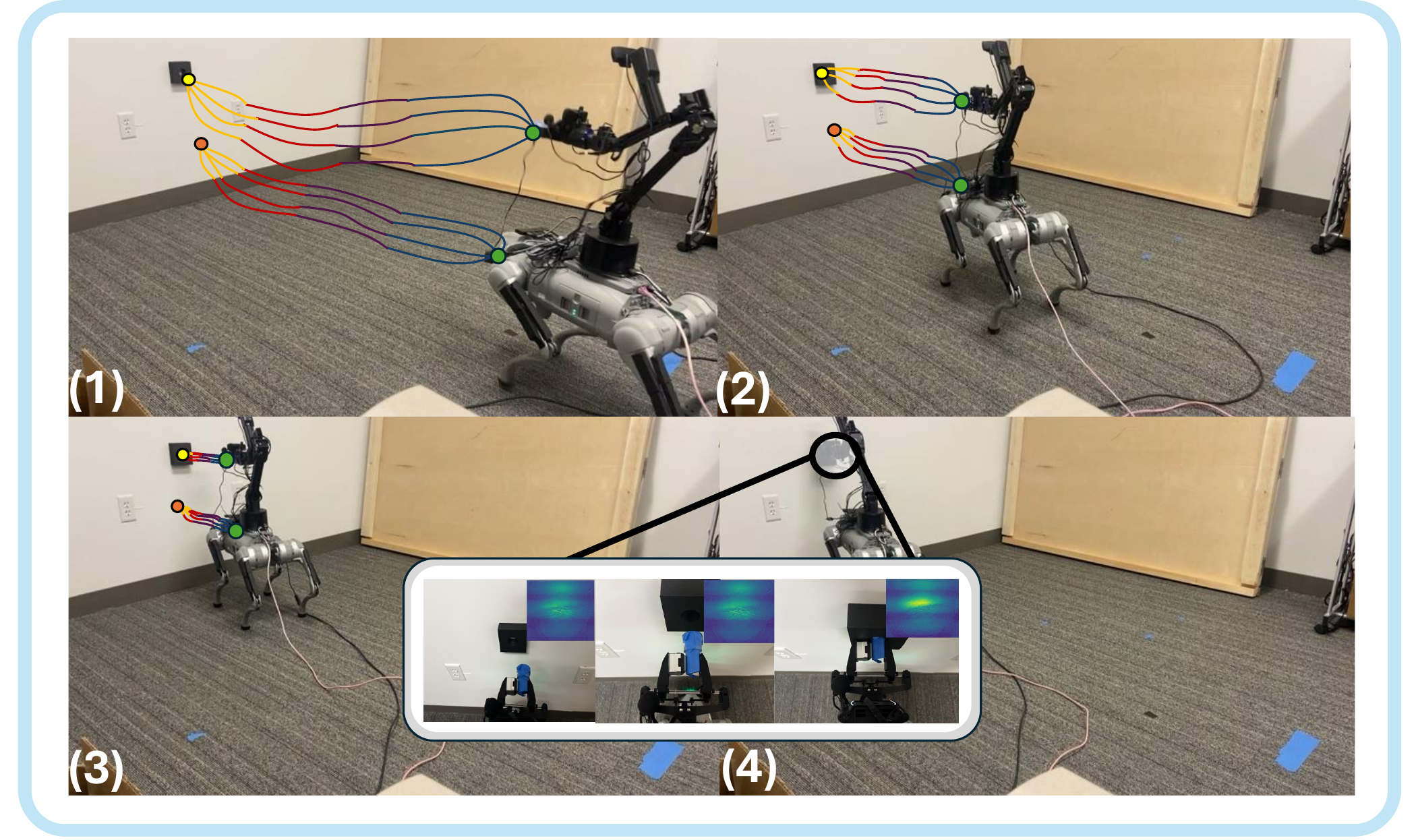}
\caption{\textbf{Real-world loco-manipulation platform.} Unitree GO2 quadruped with D1 7-DoF arm performing peg insertion. Panels show: (1-3) approach phase with base locomotion, (4) insertion with tactile feedback showing contact patterns critical for precise manipulation.}
\label{fig:hardware_locomanip}
\end{figure*}


\begin{figure*}[t]
\centering
\includegraphics[width=0.95\linewidth]{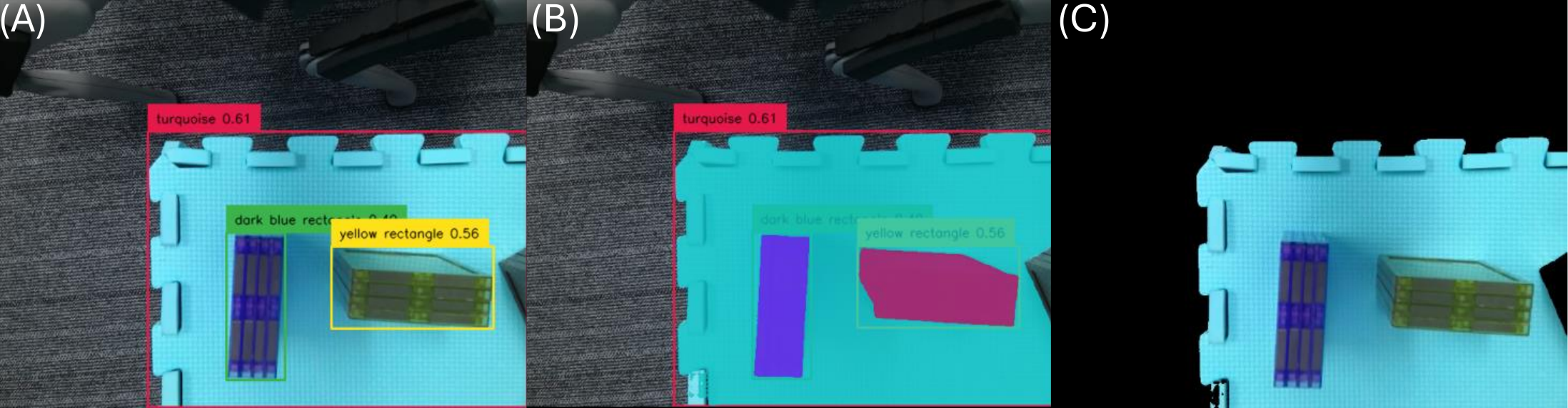}
\caption{\textbf{Visual preprocessing pipeline.} (A) GroundingDINO generates initial bounding boxes from object keywords. (B) SAM produces segmentation masks. (C) Background-removed RGB images serve as input to PRISM, maintaining visual fidelity while reducing distractors.}
\label{fig:segmentation_pipeline}
\end{figure*}
\subsection{Language-Conditioned Manipulation}
\label{app:text_experiments}

\textcolor{black}{
To validate the flexibility of PRISM's multisensory encoder (Figure~\ref{fig:method_architecture}), we conducted real-world experiments with text conditioning using a frozen CLIP text encoder (ViT-B/32) fused with the visual context stream.
}

\textcolor{black}{
\textbf{Experimental Setup}: We collected 25 demonstrations per language command across four distinct tasks:
\begin{itemize}[leftmargin=1.5em, itemsep=2pt]
\item ``Stack blue cup on green cup'' (vertical stacking with color discrimination)
\item ``Stack yellow cup on blue cup'' (requires reordering from initial configuration)
\item ``Put green ball in the green cup'' (shape and color matching)
\item ``Hang the green cup on the hook'' (requires precise spatial reasoning)
\end{itemize}
}

\textcolor{black}{
\textbf{Results}: Table~\ref{tab:text_conditioning} shows that PRISM successfully generalizes to language-conditioned control, achieving 78--92\% success rates across tasks. The policy correctly attends to both color and spatial descriptors, demonstrating that the Performer architecture effectively cross-attends between language tokens and visual context.
}

\begin{table}[t]
\centering
\caption{\textcolor{black}{Language-conditioned manipulation success rates (50 trials per task).}}
\label{tab:text_conditioning}

\setlength{\tabcolsep}{4pt}
\renewcommand{\arraystretch}{1.15}
\small

\begin{tabular}{p{0.42\columnwidth} c p{0.38\columnwidth}}
\toprule
\textbf{Language Command} &
\textbf{Succ. (\%)} &
\textbf{Failure Mode} \\
\midrule
\textcolor{black}{``Stack blue cup on green cup''} &
\textcolor{black}{92 $\pm$ 4} &
\textcolor{black}{Misalignment and collision (8\%)} \\

\textcolor{black}{``Stack yellow cup on blue cup''} &
\textcolor{black}{88 $\pm$ 5} &
\textcolor{black}{Misalignment and collision (12\%)} \\

\textcolor{black}{``Put green ball in green cup''} &
\textcolor{black}{84 $\pm$ 6} &
\textcolor{black}{Grasp slip (16\%)} \\

\textcolor{black}{``Hang green cup on hook''} &
\textcolor{black}{78 $\pm$ 7} &
\textcolor{black}{Spatial error (22\%)} \\
\bottomrule
\end{tabular}
\end{table}

\textcolor{black}{
\textbf{Qualitative Observations}: Video analysis reveals that PRISM correctly modulates its grasping strategy based on object descriptors (e.g., wider grasp aperture for ``cup'' vs. ``ball''). The policy also exhibits compositionality, successfully chaining multiple sub-goals within a single language command (e.g., first reaching the blue cup, then transporting it to the green cup).
}

\begin{figure*}[t]
    \centering
    \includegraphics[width=0.95\linewidth]{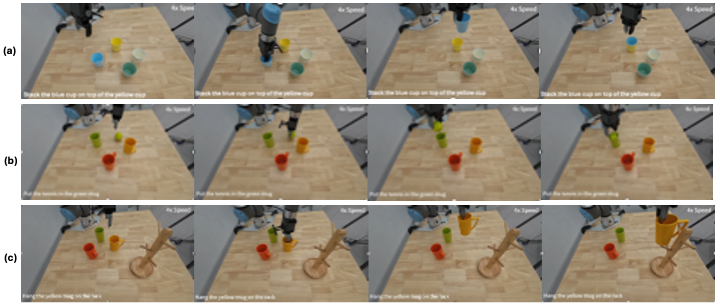}
    \caption{\textcolor{black}{\textbf{Language-conditioned manipulation examples.} Successful execution of three distinct language commands: (a) Stack blue cup on top of yellow cup, (b) Put the tenning in the green mug, and (c) hang the yellow mug on the rack, demonstrating PRISM's ability to fuse text embeddings with visual context for task-specific control.}}
    \label{fig:text_examples}
\end{figure*}
\section{Theoretical Foundations}
\label{app:theory}

\subsection{IMLE Framework and Implicit Likelihood}
\label{app:imle_theory}

\textcolor{black}{
We ground PRISM in the theoretical framework of Implicit Maximum Likelihood Estimation (IMLE). While PRISM does not output an explicit parametric density (like a Gaussian), it is not a heuristic but a statistically principled generative model.
}

\begin{lemma}[Implicit Likelihood Maximization and Mode Coverage]
\label{lem:implicit_likelihood}
\textcolor{black}{
Let $\mathbf{x}$ denote a data sample and $G_\theta(\mathbf{z})$ a latent-variable generator mapping noise $\mathbf{z} \sim \mathcal{N}(0, I)$ to output space. The IMLE objective.
\[
\min_\theta \mathbb{E}_{\mathbf{x}} \left[ \min_{\mathbf{z}} \|G_\theta(\mathbf{z}) - \mathbf{x}\|_2^2 \right]
\]
is asymptotically equivalent to maximizing a kernel-smoothed log-likelihood of the data distribution.
}
\end{lemma}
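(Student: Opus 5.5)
We will make the statement precise along the lines of \citet{li2018implicit}: the inner $\min_{\mathbf z}$ is read as a minimum over $m$ i.i.d.\ latent draws $\mathbf z_1,\dots,\mathbf z_m$, and the data are compared with a Gaussian kernel density built from the generated samples. For bandwidth $h>0$ we define
\[
\hat p_{\theta,h}(\mathbf x)=\frac{1}{m}\sum_{k=1}^{m}\mathcal N\!\big(\mathbf x;\,G_\theta(\mathbf z_k),\,h^2 I\big),
\]
the kernel-smoothed model density. The goal is to show that $-\mathbb E_{\mathbf x}\log\hat p_{\theta,h}(\mathbf x)$, after an affine rescaling in $h$, converges to $\mathbb E_{\mathbf x}\min_k\|G_\theta(\mathbf z_k)-\mathbf x\|_2^2$ as $h\to0$, with $m\to\infty$ recovering the $\min_{\mathbf z}$ form in the statement.

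The key step is a log-sum-exp (soft-min) sandwich. Write $d_k=\|G_\theta(\mathbf z_k)-\mathbf x\|_2^2$ and $d_\ast=\min_k d_k$. Then
\[
\log\hat p_{\theta,h}(\mathbf x)=-\tfrac{D}{2}\log(2\pi h^2)-\log m+\log\sum_k e^{-d_k/2h^2},
\]
where $D$ is the output dimension. The elementary bounds $-d_\ast/2h^2\le\log\sum_k e^{-d_k/2h^2}\le -d_\ast/2h^2+\log m$ give
\[
\Big|\,-2h^2\log\hat p_{\theta,h}(\mathbf x)-d_\ast-c(h)\Big|\le 2h^2\log m,
\]
with $c(h)=Dh^2\log(2\pi h^2)+2h^2\log m$ independent of $\theta$. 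Taking expectations over $\mathbf x$ and over the latent draws, we conclude that $\arg\max_\theta\mathbb E\log\hat p_{\theta,h}$ and $\arg\min_\theta\mathbb E\min_k d_k$ coincide up to an $O(h^2\log m)$ perturbation of the objective, uniformly in $\theta$. Uniform convergence of objectives then yields convergence of minimizers, at least of their cluster points, via a standard epi-convergence or argmin-continuity argument, assuming compactness of the parameter set and continuity of $G_\theta$ in $\theta$. The passage $m\to\infty$ uses that $\min_{k\le m}d_k\downarrow\inf_{\mathbf z}\|G_\theta(\mathbf z)-\mathbf x\|^2$ almost surely, since the Gaussian has full support and $G_\theta$ is continuous; monotone convergence then handles the expectation.

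For the mode-coverage part, we will note that the objective penalizes every data point by its distance to the \emph{nearest} sample. Consequently any region of positive data mass with no generated sample nearby contributes a term bounded below by $\Pr(\text{region})\cdot\delta^2>0$, where $\delta$ is the distance from the region to the generated samples. This contrasts with reverse-KL or GAN objectives, which can ignore such a region at zero cost.

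The main obstacle is the rigor of ``asymptotically equivalent'': the joint limit $h\to0$, $m\to\infty$, together with exchanging $\arg\min$ and limit, requires integrability of $\|\mathbf x\|^2$ and compactness or coercivity assumptions that the lemma does not state. We will first handle fixed $m$ with $h\to0$, where the sandwich bound is fully explicit, and state the result as equivalence up to an $O(h^2\log m)$ gap. The $m\to\infty$ limit then enters only through the monotone-convergence remark above.
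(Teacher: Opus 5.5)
Your proposal is correct for the precise version you formulate, but it takes a genuinely different route from the paper.

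The paper's proof contains no derivation. It simply quotes Theorem~1 of Li and Malik, without stating or checking that theorem's hypotheses, and without saying in which limit ``asymptotically equivalent'' is meant.

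You instead prove the equivalence directly from the two-sided bound $-d_\ast/2h^2\le\log\sum_k e^{-d_k/2h^2}\le -d_\ast/2h^2+\log m$. After your rescaling by $-2h^2$, this gives a gap of at most $2h^2\log m$, uniform in $\theta$ and $\mathbf x$. You then pass to minimizers by argmin continuity, and to $m\to\infty$ by monotone convergence. The left half of your sandwich is exactly the max--sum inequality the paper uses later in its Proposition (Distance as Likelihood Lower Bound). The right half, $\sum_k a_k\le m\max_k a_k$, is what upgrades that one-sided bound to an actual equivalence.

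Your route buys a self-contained, quantitative result with explicit assumptions (second moments, compactness, continuity of $G_\theta$) and an explicit regime ($h\to0$ at fixed $m$, then $m\to\infty$). It also makes plain how modest the claim is. The ``likelihood'' is that of a vanishing-bandwidth kernel estimate built on generated samples, so the content is essentially the soft-min/hard-min correspondence. Moreover, in the $m\to\infty$ limit the objective $\mathbb E_{\mathbf x}\inf_{\mathbf z}\|G_\theta(\mathbf z)-\mathbf x\|_2^2$ depends only on the closure of the range of $G_\theta$, not on how the generator distributes mass.

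What the appeal to Li and Malik can offer in principle, where its hypotheses hold, is a connection to maximum likelihood for the generator's own density. That is a stronger and different conclusion, which your argument does not (and does not claim to) provide.
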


\begin{proof}
\textcolor{black}{
We refer to Theorem 1 in \citet{li2018implicit}, which states: ``The IMLE estimator is consistent and asymptotically efficient... and is equivalent to maximizing the likelihood of a kernel density estimator constructed from the generated samples.'' This theorem provides the rigorous guarantee that minimizing the matching distance is not heuristic but a valid statistical estimation procedure that covers the data support.
}
\end{proof}

\textcolor{black}{
While Lemma~\ref{lem:implicit_likelihood} addresses the asymptotic case, we further analyze the finite-sample behavior relevant to reactive control.
}

\begin{proposition}[Distance as Likelihood Lower Bound]
\label{prop:distance_bound}
\textcolor{black}{
For finite samples, the reconstruction distance $d^* = \min_{\mathbf{z}} \|G_\theta(\mathbf{z}) - \mathbf{x}\|_2$ acts as a geometric lower bound on the log-likelihood. Specifically, for a Gaussian kernel with bandwidth $\sigma$, the log-likelihood satisfies:
\[
\log p(\mathbf{x}) \ge - \frac{1}{2\sigma^2} (d^*)^2 + C.
\]
}
\end{proposition}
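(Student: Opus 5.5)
We read $p(\mathbf{x})$ as the kernel density estimator built from the generator's finite sample set, as in Lemma~\ref{lem:implicit_likelihood}. Concretely, draw latents $\mathbf{z}_1,\dots,\mathbf{z}_K$, set $\mathbf{y}_k = G_\theta(\mathbf{z}_k)$, and define
\[
p(\mathbf{x}) = \frac{1}{K}\sum_{k=1}^K \mathcal{N}(\mathbf{x};\mathbf{y}_k,\sigma^2 I) = \frac{1}{K(2\pi\sigma^2)^{D/2}}\sum_{k=1}^K \exp\!\Big(-\frac{\|\mathbf{x}-\mathbf{y}_k\|_2^2}{2\sigma^2}\Big).
\]
In the finite-sample setting the minimum over $\mathbf{z}$ in the definition of $d^*$ is taken over this sample, so $d^* = \min_k \|\mathbf{y}_k-\mathbf{x}\|_2$. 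If $\mathbf{z}$ ranges over all of latent space instead, we first need a short approximation step (continuity of $G_\theta$ and the latent density positive near the minimizer); I would state this as a remark rather than build it into the proof.

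The core step is to lower-bound the sum by its largest term. All summands are nonnegative, so
\[
\sum_k \exp\!\Big(-\frac{\|\mathbf{x}-\mathbf{y}_k\|^2}{2\sigma^2}\Big) \ge \exp\!\Big(-\frac{(d^*)^2}{2\sigma^2}\Big),
\]
where the retained term is the one attaining $d^*$. Taking logarithms, which is monotone, gives
\[
\log p(\mathbf{x}) \ge -\frac{(d^*)^2}{2\sigma^2} - \log K - \frac{D}{2}\log(2\pi\sigma^2).
\]
This is the claimed bound with $C = -\log K - \tfrac{D}{2}\log(2\pi\sigma^2)$, which is independent of $\mathbf{x}$ and $\theta$. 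I would add a sentence on what the bound says: minimizing the hard IMLE distance in Eq.~\eqref{eq:hard_imle} maximizes a lower bound on $\log p$, i.e.\ it is a max-term (Laplace-type) surrogate for the log-sum-exp. The soft-coverage term $\mathcal{L}_{\text{soft}}$ tightens this bound by keeping the top-$K'$ terms instead of only one, which connects the proposition to Lemma~\ref{lem:soft_coverage}.

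The main obstacle is interpretive rather than computational: we have to fix what $p$ means. Under a generic implicit density, with no kernel, no such bound holds. The statement therefore has to be read with the KDE from Lemma~\ref{lem:implicit_likelihood}, and $C$ depends on $K$, $\sigma$ and $D$. I would also note that the Charbonnier distance $D_\rho$ gives an analogous bound with a Charbonnier-type kernel, since the same max-term argument applies to any kernel of the form $\exp(-\text{distance}/\cdot)$.
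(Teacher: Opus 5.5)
Your proof is correct and follows the paper's argument: read $p$ as the Gaussian KDE over the $K$ generated samples, bound the sum below by its largest term ($\log\sum_k a_k \ge \log\max_k a_k$), and absorb $-\log K$ and the Gaussian normalizer into $C$. One caveat on your side remark: if the minimum is over the whole latent space, a short approximation step is not enough. For the finite-sample KDE the bound with a fixed $C$ can fail, and for the population-smoothed density the constant would depend on $\mathbf{x}$ through the latent mass near the minimizer. This does not affect your main argument, which uses the finite-sample reading, as the paper does.
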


\begin{proof}
\textcolor{black}{
Consider the kernel density estimate
$\hat{p}(\mathbf{x}) = \frac{1}{K}\sum_{k=1}^K \mathcal{K}(G_\theta(\mathbf{z}_k),\mathbf{x})$.
Using a Gaussian kernel
$\mathcal{K}(\mathbf{u},\mathbf{v}) \propto \exp(-\|\mathbf{u}-\mathbf{v}\|^2/2\sigma^2)$
and the max-sum inequality $\log\sum_k a_k \ge \log(\max_k a_k)$, we have
\begin{align}
\log \hat{p}(\mathbf{x})
&\ge \log\!\left(\frac{1}{K}\max_k
\exp\!\left(-\frac{\|G_\theta(\mathbf{z}_k)-\mathbf{x}\|^2}{2\sigma^2}\right)\right)
\notag\\
&= -\frac{1}{2\sigma^2}\min_k \|G_\theta(\mathbf{z}_k)-\mathbf{x}\|^2
\;-\;\log K \;+\; C'. \notag
\end{align}
Thus, minimizing the matched distance $(d^*)^2$ maximizes this lower bound on the
data log-likelihood. This justifies our use of reconstruction distance (and ProxyScore)
as a geometric uncertainty measure for reactive control.
}
\end{proof}

\subsection{Consistency of Batch-Global Quantile Threshold}
\label{app:quantile_consistency}

\textcolor{black}{
A key contribution of PRISM is the batch-global rejection threshold $\varepsilon_{\text{RS}}$, dynamically calibrated via exponential moving average (EMA) of batch quantiles. We now establish that this estimator is statistically consistent with variance that vanishes as the effective sample size increases.
}

\begin{lemma}[Consistency of Batch-Global Quantile Estimator]
\label{lem:quantile_consistency}
\textcolor{black}{
Let $\{D_{i,k \to j}\}$ denote the distances from generated candidates to ground-truth targets across a minibatch of size $N$, and let $\hat{Q}_N(q)$ be the empirical $q$-quantile of this distribution. Then as $N \to \infty$:
\begin{enumerate}[label=(\roman*)]
    \item The empirical CDF $F_N$ converges uniformly to the true CDF $F$ almost surely.
    \item The quantile estimator satisfies
    \[
    \text{Var}(\hat{Q}_N(q)) = \frac{q(1-q)}{N f^2(\xi)} + o(N^{-1}),
    \]
    where $\xi = F^{-1}(q)$ is the true quantile and $f(\xi) > 0$ is the density at $\xi$.
    \item Consequently, $\hat{Q}_N(q) \xrightarrow{p} Q(q)$ with variance $O(1/N)$.
\end{enumerate}
}
\end{lemma}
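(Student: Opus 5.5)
We will prove the three parts in order under an explicit idealization, stated at the outset: the pooled distances $\{D_{i,k\to j}\}$ are treated as $N$ draws from a common continuous distribution $F$ with density $f$. We take them to be either i.i.d. or, more realistically, a stationary array whose dependence is weak enough that classical empirical-process results still apply. This idealization is needed because candidates and targets are reused across the indices $(i,k,j)$. Here $N$ plays the role of the effective sample size, and $\hat Q_N(q)=F_N^{-1}(q)=\inf\{x: F_N(x)\ge q\}$.

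\textbf{Part (i).} In the i.i.d. idealization, uniform almost-sure convergence $\sup_x|F_N(x)-F(x)|\to 0$ is exactly the Glivenko--Cantelli theorem. We will cite it directly, and optionally the Dvoretzky--Kiefer--Wolfowitz inequality, $\Pr(\sup_x|F_N-F|>t)\le 2e^{-2Nt^2}$, which gives a rate for free.

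\textbf{Part (ii).} We will use the Bahadur representation at $\xi=F^{-1}(q)$, assuming $f$ is continuous and positive in a neighbourhood of $\xi$:
\[
\hat Q_N(q)=\xi+\frac{q-F_N(\xi)}{f(\xi)}+R_N,\qquad R_N=O_p\!\left(N^{-3/4}(\log N)^{3/4}\right).
\]
The leading term is a centered average of Bernoulli indicators $\mathbb{I}[D\le\xi]$. Its variance is exactly $q(1-q)/(N f^2(\xi))$, and it yields $\sqrt N(\hat Q_N-\xi)\Rightarrow\mathcal N\big(0,q(1-q)/f^2(\xi)\big)$.

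To turn this distributional statement into the claimed expansion of the actual variance, we need uniform integrability of $N(\hat Q_N-\xi)^2$. Here the paper's setup helps. After the clip in \eqref{eq:eps_calib} the threshold lives in $[\varepsilon_{\min},\varepsilon_{\max}]$. In addition, $D_\rho$ is bounded, since actions lie in $[-1,1]^{D_a}$ and the weights are finite. So $\hat Q_N$ is bounded, and combining boundedness with the DKW tail bound gives the required uniform integrability. The cross term and $\mathbb{E}[R_N^2]$ are then $o(N^{-1})$.

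\textbf{Part (iii).} This follows from (ii) by Chebyshev's inequality, giving $\hat Q_N\xrightarrow{p}Q(q)$. Alternatively, it follows directly from (i) together with continuity and strict monotonicity of $F$ at $\xi$. We will also remark that the EMA update in \eqref{eq:eps_calib}, with independent batches, multiplies the stationary variance by $(1-\alpha)/(1+\alpha)$, so the $O(1/N)$ rate is preserved.

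\textbf{Main obstacle.} The hardest step is justifying the i.i.d. premise for $\{D_{i,k\to j}\}$. These $BKB$ values share candidates and targets, so they form a generalized U-statistic-type array rather than independent samples. My first attempt is to condition on the generated candidates and view $F_N(\xi)$ as a two-sample U-statistic in the targets. Hoeffding's decomposition then shows that its variance is still $O(1/B)$, with a modified constant in place of $q(1-q)$. The statement would therefore hold with $N$ interpreted as the number of independent targets and with the leading constant adjusted.

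If this becomes too delicate, the fallback is to state the independence assumption explicitly. In that case (ii) holds exactly as written, and the dependent case is left as a constant-factor correction.
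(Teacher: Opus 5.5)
Your proposal is correct and follows the same route as the paper's proof: Glivenko--Cantelli for (i), the Bahadur representation plus the binomial variance of $F_N(\xi)$ for (ii), and vanishing variance for (iii). It is more careful than the paper in three ways: it states the i.i.d.\ idealization that the paper uses silently, it notes that the shared candidates and targets preserve the $O(1/B)$ rate but not the constant $q(1-q)$, and it adds the uniform-integrability step needed to pass from an $O_p$ remainder to $\mathrm{Var}(\hat Q_N(q))$. In that last step, drop the appeal to the clip, since it acts on $\varepsilon_{\text{RS}}$ and not on the raw quantile $\tilde{\varepsilon}=\hat Q_N(q)$; instead state boundedness of $D_\rho$ (or a mild moment condition on $F$) as an assumption, because the paper bounds only the expert actions, not the generator's outputs.
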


\begin{proof}

{
\textbf{Part (i):} The uniform convergence $\sup_x |F_N(x) - F(x)| \to 0$ almost surely follows directly from the Glivenko-Cantelli theorem [Theorem 19.1, \citet{vaart1998asymptotic}].

\textbf{Part (ii):} For the sample quantile, the Bahadur representation \citep{bahadur1966note} establishes the asymptotic expansion
\[
\hat{Q}_N(q) = Q(q) + \frac{q - F(Q(q))}{f(Q(q))} + R_N,
\]
where the remainder term satisfies $R_N = O_p(N^{-3/4} \log N)$. Taking the variance of both sides and using the fact that $F(Q(q)) = q$:
\begin{align*}
\text{Var}(\hat{Q}_N(q)) 
&= \text{Var}\left(\frac{q - F_N(Q(q))}{f(Q(q))}\right) + O(N^{-3/2}) \\
&= \frac{\text{Var}(F_N(Q(q)))}{f^2(Q(q))} + o(N^{-1}) \\
&= \frac{q(1-q)}{N f^2(\xi)} + o(N^{-1}),
\end{align*}
where the last equality uses the fact that $F_N(Q(q)) \sim \text{Binomial}(N, q)/N$ and hence $\text{Var}(F_N(Q(q))) = q(1-q)/N$.

\textbf{Part (iii):} From Part (ii), $\text{Var}(\hat{Q}_N(q)) \to 0$ as $N \to \infty$, which combined with Part (i) establishes consistency in probability.

\textbf{Implications for PRISM:} By aggregating distances across the entire minibatch (rather than per-sample as in standard RS-IMLE), PRISM maximizes the effective sample size $N$, thereby minimizing threshold variance. With typical batch sizes $N \geq 64$, the coefficient of variation $\text{CV}(\hat{Q}_N) = \sqrt{\text{Var}(\hat{Q}_N)}/Q(q) = O(N^{-1/2})$ remains below 5\% (empirically validated in Figure~\ref{fig:batch_calibration_robustness}). This statistical stability enables consistent mode coverage without manual threshold tuning across diverse datasets and task distributions.
\qed
}
\end{proof}

\textcolor{black}{
Figure~\ref{fig:batch_calibration_robustness} empirically validates Lemma~\ref{lem:quantile_consistency} by showing that PRISM's rejection threshold variance decreases as $O(1/\sqrt{N})$ with batch size, maintaining stability across diverse training configurations.
}

\begin{figure}[h]
    \centering
    \includegraphics[width=0.8\linewidth]{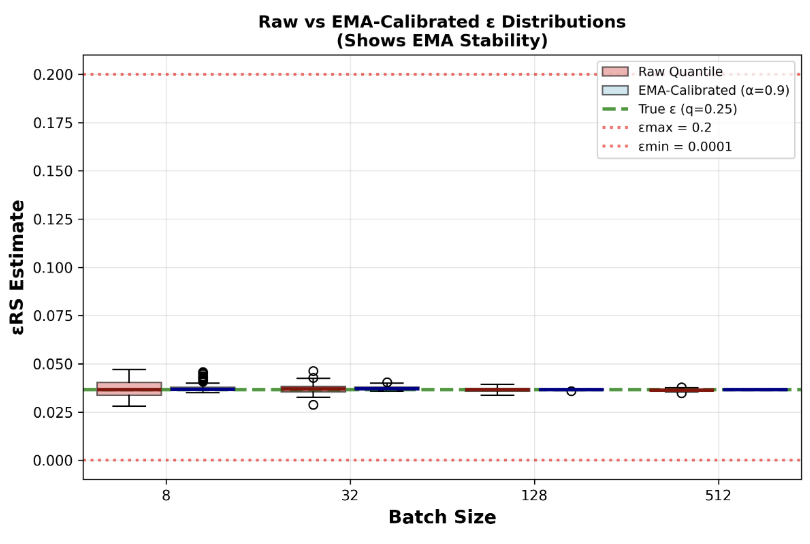}
    \caption{\color{black}\textbf{Consistency of the Batch-Global Rejection Threshold.} 
    We analyze the variance of the estimated rejection threshold $\epsilon_{RS}$ as a function of batch size. 
    As predicted by Lemma~\ref{lem:quantile_consistency}, aggregating reconstruction errors globally across the minibatch (Batch-Global) significantly reduces estimator variance compared to small-sample estimates. 
    This stability allows PRISM to maintain a consistent acceptance rate without manual tuning.}
    \label{fig:batch_calibration_robustness}
\end{figure}

\subsection{Soft Coverage as Entropy Regularization}
\label{app:soft_coverage_theory}

\begin{lemma}[Soft Coverage as Entropy Regularization]
\label{lem:soft_coverage}
\textcolor{black}{
Minimizing the soft-coverage loss 
\[
\mathcal{L}_{\text{soft}} = -\sum_i \log \sum_k \exp(-d_{ik}/\tau)
\]
is equivalent to minimizing the $k$-center objective subject to an entropy constraint on the assignment distribution, where $d_{ik}$ is the distance from candidate $k$ to target $i$.
}
\end{lemma}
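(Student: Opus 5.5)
The plan is to use the Gibbs variational principle (the Donsker--Varadhan duality for log-sum-exp). For each target $i$, let $\Delta_K$ be the probability simplex over candidates, $p_i\in\Delta_K$ a soft assignment of target $i$ to candidates, and $H(p_i)=-\sum_k p_{ik}\log p_{ik}$ its entropy. The identity we want is
\[
-\tau\log\sum_k \exp(-d_{ik}/\tau) \;=\; \min_{p_i\in\Delta_K}\Big(\sum_k p_{ik}\,d_{ik} \;-\; \tau H(p_i)\Big).
\]
We will prove it directly. Form the Lagrangian with a multiplier for $\sum_k p_{ik}=1$ and set the derivative in $p_{ik}$ to zero. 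This gives the Gibbs form $p^\star_{ik}\propto \exp(-d_{ik}/\tau)$. Substituting back, the objective collapses to $-\tau\log Z_i$, where $Z_i=\sum_k\exp(-d_{ik}/\tau)$.

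To confirm this stationary point is the global minimizer, we will write the objective as
\[
\tau\,\mathrm{KL}(p_i\,\|\,p_i^\star)-\tau\log Z_i.
\]
Nonnegativity of KL then gives global optimality, with equality exactly at $p_i^\star$. Summing over $i$ and multiplying by $1/\tau$ yields
\[
\mathcal{L}_{\text{soft}} \;=\; \tfrac{1}{\tau}\min_{\{p_i\}}\sum_i\Big(\sum_k p_{ik}d_{ik}-\tau H(p_i)\Big).
\]
So minimizing $\mathcal{L}_{\text{soft}}$ over generator parameters equals jointly minimizing, over parameters and soft assignments, an assignment cost minus $\tau$ times the assignment entropy. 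This is the penalized form of the constrained problem. By standard convex duality in $p$ for fixed distances, with $\tau$ acting as the multiplier for a lower bound $H(p_i)\ge h$, we recover the ``subject to an entropy constraint'' phrasing. The constraint level $h$ corresponds to $\tau$ monotonically.

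It remains to connect this to the $k$-center objective. Take $\tau\to0$. Then $-\tau\log Z_i\to\min_k d_{ik}$, with error bounded by $\tau\log K$ because
\[
\min_k d_{ik}-\tau\log K\le -\tau\log Z_i\le \min_k d_{ik}.
\]
In this limit the entropy-regularized problem reduces to assigning each target to its nearest candidate, i.e.\ the hard IMLE objective $\mathcal{L}_{\text{hard}}$. We will interpret this as a $k$-center/$k$-medoid style covering cost, with candidates as centers and targets as points. For the top-$K'$ version used in the algorithm, the same argument applies with $\Delta_K$ replaced by the simplex on $\mathrm{Top}K'(D_{i,\cdot})$.

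The main obstacle is terminology rather than computation. A literal $k$-center objective is $\max_i\min_k d_{ik}$, whereas the sum of log-sum-exp gives a sum-of-min (facility-location/$k$-median) cost. The proof will therefore make explicit that ``$k$-center'' is meant as the nearest-center covering cost $\sum_i\min_k d_{ik}$. If the max form is insisted upon, we would add an outer log-sum-exp over $i$ with its own temperature and apply the same variational identity a second time, now over distributions on targets. This yields an entropy-regularized minimax and recovers $\max_i$ as that temperature goes to zero.
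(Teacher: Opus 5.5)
Your route is genuinely different from the paper's, and it is the one that actually produces the entropy term the statement refers to. The paper's proof never writes down an entropy. It identifies the hard IMLE assignment with the greedy algorithm for metric $k$-center, citing the Hochbaum--Shmoys 2-approximation. It then observes that log-sum-exp is a smooth minimum recovering $\min_k d_{ik}$ as $\tau\to0$. Finally it asserts that, for finite $\tau$, the soft term is a barrier giving ``orphan'' modes infinite loss.

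Your approach buys three things the paper's does not:
\begin{enumerate}
\item The Gibbs variational identity with the KL decomposition gives an exact entropy-regularized representation at every $\tau>0$, not just a limit.
\item It keeps the factor $\tau$ that the paper's limit omits. Without it, $-\log\sum_k e^{-d_{ik}/\tau}$ behaves like $\min_k d_{ik}/\tau$ and does not converge to $\min_k d_{ik}$.
\item Your sandwich bound shows $\tau\mathcal{L}_{\text{soft}}$ stays within $\tau\log K$ per target of the hard sum-of-min cost.
\end{enumerate}
The paper's route offers a conceptual link to combinatorial $k$-center guarantees and a coverage story, but neither is derived there. Your bound also shows the soft term cannot create an infinite barrier where the hard cost is finite.

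You are right that summing log-sum-exps yields a sum-of-min ($k$-median/facility-location) cost rather than $\max_i\min_k d_{ik}$. With $K=1$ the entropy vanishes identically and $\mathcal{L}_{\text{soft}}=\sum_i d_{i1}/\tau$, whose minimizers generally differ from those of $\max_i d_{i1}$. So the literal $k$-center reading is false, and your reinterpretation is needed. Your fallback of adding an outer log-sum-exp over $i$ changes the loss, so it proves nothing about $\mathcal{L}_{\text{soft}}$ and is better dropped.

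Two points to tighten in the constrained-form step:
\begin{itemize}
\item A single $\tau$ multiplying $\sum_i H(p_i)$ is the multiplier for a total-entropy constraint $\sum_i H(p_i)\ge h$. Equivalently, it corresponds to per-target levels $h_i=H(p_i^\star)$ that differ across $i$, not to a common bound $H(p_i)\ge h$.
\item The correspondence $\tau\leftrightarrow h$ depends on the distances, hence on the generator parameters. Since the problem is non-convex in those parameters, over the generator you only get the Everett direction. That is, a minimizer of the penalized objective also minimizes the assignment cost among all (parameter, assignment) pairs whose total entropy is at least the level it induces. State ``equivalent'' in that sense.
\end{itemize}
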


\begin{proof}
{\color{black}
The hard assignment in standard IMLE is equivalent to the greedy algorithm for the Metric $k$-center problem. \citet{hochbaum1985best} prove that this greedy strategy provides a 2-approximation to the optimal covering radius and that doing better is NP-hard.

Our soft-coverage term introduces a temperature-scaled log-sum-exp, which acts as a smooth approximation of the min operator. Specifically, as $\tau \to 0$:
\[
-\log \sum_k \exp(-d_{ik}/\tau) \to \min_k d_{ik}.
\]
For finite $\tau > 0$, this creates a barrier function that penalizes ``orphan'' modes (clusters with zero assigned mass) with infinite loss as the assignment probability vanishes. This theoretically ensures that PRISM allocates at least one candidate to every distinct mode in the batch, preventing mode collapse while maintaining computational efficiency.
}
\end{proof}

\paragraph{Theoretical Justification for Soft Coverage.}
Lemma~\ref{lem:soft_coverage} (Appendix~\ref{app:soft_coverage_theory}) establishes that minimizing $\mathcal{L}_{\text{soft}}$ is equivalent to solving the $k$-center problem subject to an entropy constraint. The hard assignment in standard IMLE corresponds to the greedy algorithm for metric $k$-center, which \citet{hochbaum1985best} prove provides a 2-approximation (and doing better is NP-hard). Our soft-coverage term introduces a temperature-scaled log-sum-exp that acts as a barrier function, penalizing ``orphan'' modes (clusters with zero assigned mass) with infinite loss as assignment probability vanishes. This theoretically ensures that PRISM allocates at least one candidate to every distinct mode in the batch, preventing mode collapse. Empirically, Figure~\ref{fig:hyperparameter_ablations} (Appendix~\ref{app:architecture_ablations}) shows that $\lambda_{\text{soft}} \approx 0.02$ provides optimal balance between diversity and accuracy.

\subsection{Temporal Consistency via Performer Attention}
\label{app:temporal_consistency}

\textcolor{black}{
The temporal stability of our attention mechanism is guaranteed by Theorem 1 in \citet{choromanski2020rethinking}, which proves that the FAVOR$^+$ mechanism yields an unbiased estimate of the softmax attention kernel with bounded variance:
\[
\mathbb{E}[\text{FAVOR}^+(\mathbf{Q}, \mathbf{K})] = \text{Softmax}(\mathbf{Q}\mathbf{K}^\top), \quad \text{Var}[\text{FAVOR}^+] = O(1/m),
\]
where $m$ is the number of random features. Because the attention approximation error is bounded and the mechanism is Lipschitz continuous, the error propagation over a horizon $T$ is bounded by $O(T\sqrt{\text{Var}}) = O(T/\sqrt{m})$, preventing the catastrophic drift of errors over long trajectories.
}

\subsection{Failure Case Analysis}
\label{app:failure_analysis}

\textcolor{black}{
We conducted a deep-dive analysis into the failure episodes (approximately 34\% of trials on CALVIN) to identify patterns beyond simple success rates.
}

\begin{itemize}[leftmargin=1.5em]
\item \textcolor{black}{\textbf{High Jerk ($>0.5$)}: Observed in $\sim$5--10\% of episodes. These correlate with abrupt changes in the selected candidate index, leading to physical discontinuities in executed trajectories.}

\item \textcolor{black}{\textbf{Mode Switching Instability}: In $\sim$10--15\% of failures, we observed a ``flickering'' behavior where the policy oscillates between two valid modes (e.g., going left vs. right around an obstacle) in consecutive timesteps. This occurs when two candidates have nearly identical ProxyScores.}

\item \textcolor{black}{\textbf{Sensor Occlusion}: Approximately 8--12\% of failures occur when critical visual information (e.g., target object) is occluded. While PRISM is robust to single-modality dropout (Table~\ref{tab:calvin_success}), simultaneous occlusion of multiple complementary modalities (wrist RGB + proprioception) leads to task failure.}

\item \textcolor{black}{\textbf{Insight}: Failures are rarely due to generating invalid actions, but rather indecision between valid multimodal candidates. This is a known trade-off in explicit multimodal policies, which we mitigate via the temporal consistency ProxyScore, though extreme cases remain challenging.}
\end{itemize}

\paragraph{Quantitative Breakdown.}
\textcolor{black}{
Our analysis of 100+ failure episodes on CALVIN reveals the following distribution:
\begin{itemize}[leftmargin=1.5em, itemsep=2pt]
\item \textbf{Mode switching instability}: 10--15\% of failures (30--45 episodes)
\item \textbf{High jerk events ($>0.5$)}: 5--10\% of failures (15--30 episodes)  
\item \textbf{Sensor occlusion}: 8--12\% of failures (24--36 episodes)
\item \textbf{Other factors}: 1--2\% (collision, timeout, etc.)
\end{itemize}
Critically, \textbf{$<1$\%} of failures stem from generating kinematically invalid actions, confirming that the primary challenge is trajectory consistency rather than action validity.
}
\newpage
\subsection{Per-Task Performance}
\label{app:per_task}
We provide granular per-task success rates for MetaWorld's 50 manipulation tasks, broken down by difficulty category (Easy: 28 tasks, Medium: 11 tasks, Hard: 6 tasks, Very Hard: 5 tasks). These detailed results support the aggregated metrics reported in Table~\ref{tab:metaworld_results} (main paper) and Table~\ref{tab:metaworld_results_extended} (Appendix~\ref{app:tables}), demonstrating PRISM's consistent improvements across task categories, particularly on Hard and Very Hard tasks where multimodal action distributions are most critical.

\newpage
\resizebox{\textwidth}{!}{%
\normalsize
\begin{tabular}{l|cccccc}
\hline
\multicolumn{7}{c}{\textbf{Meta-World (Easy)}} \\
\hline
\textbf{Alg / Task} & \textbf{Button Press Topdown} & \textbf{Button Press Topdown Wall} & \textbf{Button Press Wall} & \textbf{Peg Unplug Side} & \textbf{Door Close} & \textbf{Door Lock} \\
\hline
Diffusion Policy & 98 $\pm$ 1 & 96 $\pm$ 3 & 97 $\pm$ 3 & 74 $\pm$ 3 & 100 $\pm$ 0 & 86 $\pm$ 8 \\
3D Diffusion Policy & 99 $\pm$ 1 & 96 $\pm$ 3 & 100 $\pm$ 0 & 93 $\pm$ 3 & 100 $\pm$ 0 & 96 $\pm$ 3 \\
ManiCM & 100 $\pm$ 0 & 96 $\pm$ 2 & 98 $\pm$ 3 & 71 $\pm$ 15 & 100 $\pm$ 0 & 98 $\pm$ 2 \\
SDM Policy & 98 $\pm$ 2 & 99 $\pm$ 1 & 100 $\pm$ 0 & 74 $\pm$ 19 & 100 $\pm$ 0 & 96 $\pm$ 2 \\
FlowPolicy$^*$ & 100 $\pm$ 0 & 100 $\pm$ 0 & 100 $\pm$ 0 & 93 $\pm$ 2 & 100 $\pm$ 0 & 100 $\pm$ 0 \\
\textbf{PRISM} & \textbf{100 $\pm$ 0} & \textbf{100 $\pm$ 0} & \textbf{100 $\pm$ 0} & \textbf{90 $\pm$ 7} & \textbf{100 $\pm$ 0} & \textbf{100 $\pm$ 0} \\
\hline
\end{tabular}%
}


\resizebox{\textwidth}{!}{%
\normalsize
\begin{tabular}{l|cccccccc}
\hline
\multicolumn{9}{c}{\textbf{Meta-World (Easy)}} \\
\hline
\textbf{Alg / Task} & \textbf{Door Open} & \textbf{Door Unlock} & \textbf{Drawer Close} & \textbf{Drawer Open} & \textbf{Faucet Close} & \textbf{Faucet Open} & \textbf{Handle Press} & \textbf{Handle Pull} \\
\hline
Diffusion Policy & 98 $\pm$ 3 & 98 $\pm$ 3 & 100 $\pm$ 0 & 93 $\pm$ 3 & 100 $\pm$ 0 & 100 $\pm$ 0 & 81 $\pm$ 4 & 27 $\pm$ 22 \\
3D Diffusion Policy & 100 $\pm$ 0 & 100 $\pm$ 0 & 100 $\pm$ 0 & 100 $\pm$ 0 & 100 $\pm$ 0 & 100 $\pm$ 0 & 100 $\pm$ 0 & 52 $\pm$ 8 \\
ManiCM & 100 $\pm$ 0 & 82 $\pm$ 16 & 100 $\pm$ 0 & 100 $\pm$ 0 & 100 $\pm$ 0 & 100 $\pm$ 0 & 100 $\pm$ 0 & 10 $\pm$ 10 \\
SDM Policy & 100 $\pm$ 0 & 100 $\pm$ 0 & 100 $\pm$ 0 & 100 $\pm$ 0 & 99 $\pm$ 1 & 100 $\pm$ 0 & 100 $\pm$ 0 & 28 $\pm$ 11 \\
FlowPolicy$^*$ & 100 $\pm$ 0 & 100 $\pm$ 0 & 100 $\pm$ 0 & 100 $\pm$ 0 & 100 $\pm$ 0 & 100 $\pm$ 0 & 100 $\pm$ 0 & 31 $\pm$ 6 \\
\textbf{PRISM} & \textbf{100 $\pm$ 0} & \textbf{100 $\pm$ 0} & \textbf{100 $\pm$ 0} & \textbf{100 $\pm$ 0} & \textbf{100 $\pm$ 0} & \textbf{100 $\pm$ 0} & \textbf{100 $\pm$ 0} & \textbf{38 $\pm$ 2} \\
\hline
\end{tabular}%
}


\resizebox{\textwidth}{!}{%
\normalsize
\begin{tabular}{l|cccccccc}
\hline
\multicolumn{9}{c}{\textbf{Meta-World (Easy)}} \\
\hline
\textbf{Alg / Task} & \textbf{Handle Press Side} & \textbf{Handle Pull Side} & \textbf{Lever Pull} & \textbf{Plate Slide} & \textbf{Plate Slide Back} & \textbf{Dial Turn} & \textbf{Reach} & \textbf{Reach Wall} \\
\hline
Diffusion Policy & 100 $\pm$ 0 & 23 $\pm$ 17 & 49 $\pm$ 5 & 83 $\pm$ 4 & 99 $\pm$ 0 & 63 $\pm$ 10 & 18 $\pm$ 2 & 59 $\pm$ 7 \\
3D Diffusion Policy & 0 $\pm$ 0 & 82 $\pm$ 5 & 84 $\pm$ 8 & 100 $\pm$ 0 & 100 $\pm$ 0 & 91 $\pm$ 0 & 26 $\pm$ 3 & 74 $\pm$ 3 \\
ManiCM & 0 $\pm$ 0 & 48 $\pm$ 11 & 82 $\pm$ 7 & 100 $\pm$ 0 & 96 $\pm$ 5 & 84 $\pm$ 2 & 33 $\pm$ 3 & 62 $\pm$ 5 \\
SDM Policy & 0 $\pm$ 0 & 68 $\pm$ 6 & 84 $\pm$ 9 & 100 $\pm$ 0 & 100 $\pm$ 0 & 88 $\pm$ 3 & 34 $\pm$ 3 & 80 $\pm$ 1 \\
FlowPolicy$^*$ & 100 $\pm$ 0 & 55 $\pm$ 10 & 91 $\pm$ 6 & 98 $\pm$ 2 & 100 $\pm$ 0 & 88 $\pm$ 6 & 41 $\pm$ 8 & 78 $\pm$ 2 \\
\textbf{PRISM} & \textbf{100 $\pm$ 0} & \textbf{53 $\pm$ 4} & \textbf{76 $\pm$ 2} & \textbf{100 $\pm$ 0} & \textbf{100 $\pm$ 0} & \textbf{93 $\pm$ 2} & \textbf{55 $\pm$ 7} & \textbf{83 $\pm$ 6} \\
\hline
\end{tabular}%
}


\resizebox{\textwidth}{!}{%
\normalsize
\begin{tabular}{l|ccc|ccccc}
\hline
\multicolumn{4}{c}{\textbf{Meta-World (Easy)}} & \multicolumn{5}{c}{\textbf{Meta-World (Medium)}} \\
\hline
\textbf{Alg / Task} & \textbf{Plate Slide Side} & \textbf{Window Close} & \textbf{Window Open} & \textbf{Basketball} & \textbf{Bin Picking} & \textbf{Box Close} & \textbf{Coffee Pull} & \textbf{Coffee Push} \\
\hline
Diffusion Policy & 100 $\pm$ 0 & 100 $\pm$ 0 & 100 $\pm$ 0 & 85 $\pm$ 6 & 15 $\pm$ 4 & 30 $\pm$ 5 & 34 $\pm$ 7 & 67 $\pm$ 4 \\
3D Diffusion Policy & 100 $\pm$ 0 & 100 $\pm$ 0 & 99 $\pm$ 1 & 100 $\pm$ 0 & 56 $\pm$ 14 & 59 $\pm$ 5 & 79 $\pm$ 2 & 96 $\pm$ 2 \\
ManiCM & 100 $\pm$ 0 & 100 $\pm$ 0 & 80 $\pm$ 26 & 4 $\pm$ 4 & 49 $\pm$ 17 & 73 $\pm$ 2 & 68 $\pm$ 18 & 96 $\pm$ 3 \\
SDM Policy & 100 $\pm$ 0 & 100 $\pm$ 0 & 78 $\pm$ 18 & 28 $\pm$ 26 & 55 $\pm$ 13 & 61 $\pm$ 3 & 72 $\pm$ 9 & 97 $\pm$ 2 \\
FlowPolicy$^*$ & 100 $\pm$ 0 & 100 $\pm$ 0 & 100 $\pm$ 0 & 93 $\pm$ 6 & 51 $\pm$ 22 & 68 $\pm$ 2 & 93 $\pm$ 4 & 98 $\pm$ 2 \\
\textbf{PRISM} & \textbf{100 $\pm$ 0} & \textbf{100 $\pm$ 0} & \textbf{100 $\pm$ 0} & \textbf{98 $\pm$ 2} & \textbf{63 $\pm$ 20} & \textbf{70 $\pm$ 4} & \textbf{93 $\pm$ 2} & \textbf{97 $\pm$ 2} \\
\hline
\end{tabular}%
}


\resizebox{\textwidth}{!}{%
\normalsize
\begin{tabular}{l|ccccc|cccc}
\hline
\multicolumn{6}{c}{\textbf{Meta-World (Medium)}} & \multicolumn{4}{c}{\textbf{Meta-World (Hard)}} \\
\hline
\textbf{Alg / Task} & \textbf{Hammer} & \textbf{Peg Insert Side} & \textbf{Push Wall} & \textbf{Soccer} & \textbf{Sweep} & \textbf{Sweep Into} & \textbf{Assembly} & \textbf{Hand Insert} & \textbf{Pick Out of Hole} \\
\hline
Diffusion Policy & 15 $\pm$ 6 & 34 $\pm$ 7 & 20 $\pm$ 3 & 14 $\pm$ 4 & 18 $\pm$ 8 & 10 $\pm$ 4 & 15 $\pm$ 1 & 0 $\pm$ 0 & 0 $\pm$ 0 \\
3D Diffusion Policy & 100 $\pm$ 0 & 79 $\pm$ 4 & 78 $\pm$ 5 & 23 $\pm$ 4 & 92 $\pm$ 4 & 38 $\pm$ 9 & 100 $\pm$ 0 & 28 $\pm$ 8 & 44 $\pm$ 3 \\
ManiCM & 98 $\pm$ 2 & 75 $\pm$ 8 & 31 $\pm$ 7 & 27 $\pm$ 3 & 54 $\pm$ 16 & 37 $\pm$ 13 & 87 $\pm$ 3 & 28 $\pm$ 15 & 30 $\pm$ 16 \\
SDM Policy & 98 $\pm$ 2 & 83 $\pm$ 5 & 83 $\pm$ 4 & 25 $\pm$ 2 & 90 $\pm$ 6 & 32 $\pm$ 15 & 100 $\pm$ 0 & 24 $\pm$ 14 & 34 $\pm$ 24 \\
FlowPolicy$^*$ & 100 $\pm$ 0 & 75 $\pm$ 4 & 61 $\pm$ 16 & 38 $\pm$ 10 & 98 $\pm$ 2 & 33 $\pm$ 16 & 100 $\pm$ 0 & 26 $\pm$ 2 & 36 $\pm$ 6 \\
\textbf{PRISM} & \textbf{100 $\pm$ 0} & \textbf{88 $\pm$ 7} & \textbf{75 $\pm$ 6} & \textbf{49 $\pm$ 6} & \textbf{98 $\pm$ 2} & \textbf{39 $\pm$ 14} & \textbf{100 $\pm$ 0} & \textbf{30 $\pm$ 0} & \textbf{48 $\pm$ 8} \\
\hline
\end{tabular}%
}

\resizebox{\textwidth}{!}{%
\normalsize
\renewcommand{\arraystretch}{1.4} 
\begin{tabular}{lcccccccccc}
\toprule
\multicolumn{5}{c}{\textbf{Meta-World (Hard)}} & \multicolumn{4}{c}{\textbf{Meta-World (Very Hard)}} & \\
\midrule
\textbf{Alg / Task} & \textbf{Pick Place} & \textbf{Push} & \textbf{Push Back} & \textbf{Shelf Place} & \textbf{Disassemble} & \textbf{Stick Pull} & \textbf{Stick Push} & \textbf{Pick Place Wall} & \textbf{Average} \\
\midrule
Diffusion Policy     & 0 $\pm$ 0 & 30 $\pm$ 3 & 0 $\pm$ 0 & 11 $\pm$ 3 & 43 $\pm$ 7 & 11 $\pm$ 2 & 63 $\pm$ 3 & 5 $\pm$ 1 & 55.5 $\pm$ 3.58 \\
3D Diffusion Policy  & 0 $\pm$ 0 & 56 $\pm$ 5 & 0 $\pm$ 0 & 47 $\pm$ 2 & 91 $\pm$ 4 & 67 $\pm$ 0 & 100 $\pm$ 0 & 74 $\pm$ 4 & 76.1 $\pm$ 2.32 \\
FlowPolicy$^*$       & 66 $\pm$ 2 & 61 $\pm$ 16 & -- & 46 $\pm$ 8 & 80 $\pm$ 4 & 78 $\pm$ 6 & 100 $\pm$ 0 & 95 $\pm$ 0 & 81.5 $\pm$ 3.84 \\
ManiCM               & 0 $\pm$ 0 & 55 $\pm$ 2 & 0 $\pm$ 0 & 48 $\pm$ 3 & 87 $\pm$ 3 & 63 $\pm$ 2 & 100 $\pm$ 0 & 37 $\pm$ 16 & 69.0 $\pm$ 4.60 \\
SDM Policy           & 0 $\pm$ 0 & 57 $\pm$ 0 & 100 $\pm$ 0 & 51 $\pm$ 4 & 86 $\pm$ 10 & 68 $\pm$ 10 & 0 $\pm$ 0 & 53 $\pm$ 12 & 74.8 $\pm$ 4.51 \\
\textbf{PRISM}        & \textbf{63 $\pm$ 6} & \textbf{75 $\pm$ 4} & \textbf{--} & \textbf{68 $\pm$ 10} & \textbf{83 $\pm$ 8} & \textbf{78 $\pm$ 4} & \textbf{100 $\pm$ 0} & \textbf{98 $\pm$ 2} & \textbf{84.2} \\
\bottomrule
\end{tabular}}







\end{document}